\title{Fairness Through Computationally-Bounded Awareness}
\author{
  Michael P. Kim\thanks{Supported by NSF Grant CCF-1763299.}\\
  Stanford University\\
  \texttt{mpk@cs.stanford.edu}
  \And
  Omer Reingold\footnotemark[1]\\
  Stanford University\\
  \texttt{reingold@stanford.edu}
  \And 
  Guy N. Rothblum\thanks{Supported by ISF grant No. 5219/17.}\\
  Weizmann Institute of Science\\
  \texttt{rothblum@alum.mit.edu}
}
\newcommand{\A}{{\cal A}}
\newcommand{\C}{{\cal C}}
\newcommand{\D}{{\cal D}}
\newcommand{\F}{{\cal F}}
\renewcommand{\H}{{\cal H}}
\newcommand{\M}{{\cal M}}
\newcommand{\N}{\mathbb{N}}
\newcommand{\R}{\mathbb{R}}
\newcommand{\U}{{\cal U}}
\newcommand{\X}{{\cal X}}
\newcommand{\poly}{\textnormal{poly}}
\newcommand{\eps}{\varepsilon}
\newcommand{\sgn}{\textnormal{\textbf{sgn}}}
\newcommand{\card}[1] {\left\vert #1 \right\vert}
\newcommand{\set}[1] {\left\{ #1 \right\}}
\newcommand{\given}{\big\vert}
\DeclareMathOperator*{\E}{\textnormal{\textbf{E}}}
\newcommand{\norm}[1] {\left\| #1 \right\|}
\newcommand{\grad}{\nabla}
\newtheorem{theorem}{Theorem}
\newtheorem*{theorem*}{Theorem}
\newtheorem{proposition}[theorem]{Proposition}
\newtheorem*{proposition*}{Proposition}
\newtheorem*{definition}{Definition}
\newtheorem{lemma}[theorem]{Lemma}
\newcounter{algorithm}
\begin{document}

\maketitle

\setcounter{footnote}{0}

\begin{abstract}
We study the problem of fair classification within the versatile framework of Dwork~\emph{et al.} \cite{fta}, which assumes the existence of a metric that measures similarity between pairs of individuals.
Unlike earlier work, we do not assume that the entire metric is known to the learning algorithm; instead, the learner can query this \emph{arbitrary} metric a bounded number of times.
We propose a new notion of fairness called \emph{metric multifairness} and show how to achieve this notion in our setting.
Metric multifairness is parameterized by a similarity metric $d$ on pairs of individuals to classify and a rich collection $\C$ of (possibly overlapping) ``comparison sets" over pairs of individuals.  At a high level,
metric multifairness guarantees that {\em similar subpopulations are treated similarly}, as long as these subpopulations are identified within the class $\C$.
\end{abstract}

\section{Introduction}
More and more, machine learning systems are being used to make
predictions about people.  Algorithmic predictions are now being
used to answer questions of significant personal consequence;
for instance,
\emph{Is this person likely to repay a loan?} \cite{creditscores} or
\emph{Is this person likely to recommit a crime?} \cite{propublica}.
As these classification systems have become more ubiquitous,
concerns have also grown that classifiers obtained via machine learning might discriminate based on sensitive attributes like race, gender, or sexual orientation.
Indeed, machine-learned classifiers run the risk of perpetuating or amplifying historical biases present in the training data.
Examples of discrimination in classification have been well-illustrated
\cite{creditscores,propublica,corbett2017algorithmic,kleinberg2016inherent,hardt2016equality,gendershades}; nevertheless, developing a systematic approach to
fairness has been challenging.  Often, it feels that the objectives
of fair classification are at odds with obtaining high-utility predictions.

In an influential work, Dwork~\emph{et al.} \cite{fta} proposed a framework to resolve the
apparent conflict between utility and fairness,
which they call ``fairness through awareness."
This framework takes the perspective that a fair classifier should
\emph{treat similar individuals similarly}.  The work formalizes
this abstract goal by assuming
access to a task-specific similarity
metric $d$ on pairs of individuals.
The proposed notion of
fairness requires that if the distance between two individuals is small,
then the predictions of a fair classifier cannot be very different.
More formally, for some small constant $\tau \ge 0$,
we say a hypothesis
$f:\X \to [-1,1]$ satisfies
\emph{$(d,\tau)$-metric fairness}\footnote{Note the
definition given in \cite{fta} is slightly different;
in particular, they propose a more general
Lipschitz condition, but fix $\tau = 0$.}
if the following (approximate)
Lipschitz condition holds for all pairs of
individuals from the population $\X$.
\begin{equation}
\forall x,x' \in \X \times \X:~~ \card{f(x) - f(x')} \le d(x,x') + \tau
\end{equation}
Subject to these intuitive similarity constraints, the classifier may be
chosen to maximize utility.
Note that, in general, the metric may be designed externally
(say, by a regulatory agency) to address legal and ethical concerns,
independent from the task of learning.
In particular, in certain settings, the metric designers may have access to a different set of features than the learner.
For instance, perhaps the metric designers have access to sensitive attributes, but for legal, social, or pragmatic reasons, the learner does not.
In addition to its conceptual simplicity,
the modularity of fairness through awareness makes it a
very appealing framework.
Currently, there are many (sometimes contradictory)
notions of what it means for a classifier to be fair \cite{kleinberg2016inherent,corbett2017algorithmic,feller2016computer,hardt2016equality,multi},
and there is much debate on which definitions should be applied in
a given context. Discrimination comes in many forms and classification is
used in a variety of settings, so naturally, it is hard to imagine any
universally-applicable definition of fairness. Basing fairness on a similarity metric offers a flexible approach for formalizing a variety of guarantees and protections from discrimination.

Still, a challenging aspect of this approach is the assumption that the
similarity metric is known for all pairs of individuals.\footnote{Indeed,
\cite{fta} identifies this assumption as
``one of the most challenging aspects'' of the framework.}
Deciding on an appropriate metric is
itself a delicate matter and could require human input from sociologists,
legal scholars, and specialists with domain expertise.
For instance, in the loan repayment example, a simple, seemingly-objective
metric might be a comparison of credit scores. A potential concern, however,
is that these scores might themselves be biased (i.e.~encode historical
discriminations). In this case, a more nuanced metric requiring human
input may be necessary.
Further, if the metric depends on features that are latent to the learner (e.g.~some missing sensitive feature) then the metric could appear \emph{arbitrarily complex} to the learner.
As such, in many realistic settings,
the resulting metric will not
be a simple function of the learner's feature
vectors of individuals.

In most machine learning applications, where the universe of individuals
is assumed to be very large, even writing down
an appropriate metric could be completely infeasible.  In these cases,
rather than require the metric value to be specified
for all pairs of individuals, we could instead ask
a panel of experts to provide similarity scores for a
\emph{small sample} of pairs of individuals.
While it is information-theoretically impossible
to guarantee metric fairness from
a sampling-based approach,
we still might hope to provide
a strong, provable notion of fairness that maintains the
theoretical appeal and practical modularity of the
fairness through awareness framework.

In this work, we propose a new theoretical framework for fair classification
based on fairness through awareness -- which we dub ``fairness through
computationally-bounded awareness'' -- that eliminates the considerable
issue of requiring the metric to be known exactly.
Our approach maintains the simplicity and flexibility of fairness
through awareness, but provably only requires a small number of
random samples from the underlying metric, even
though we make no structural assumptions about the
metric.  In particular, our approach works even if
the metric provably cannot be learned.
Specifically, our notion will require that a fair classifier
\emph{treat similar subpopulations of individuals similarly},
in a sense we will make formal next.
While our definition relaxes fairness through awareness,
we argue that it still protects against important
forms of discrimination that the original work aimed to combat;
further, we show that stronger notions necessarily require a
larger sample complexity from the metric.  As in \cite{fta},
we investigate how to learn a classifier that achieves optimal
utility under similarity-based fairness constraints,
assuming a weaker model of limited access to the metric.
We give positive and negative results that show
connections between achieving our fairness notion and learning.

\section{Setting and metric multifairness}

\paragraph{Notation.}
Let $\X \subseteq \R^n$ denote the universe over individuals we wish to classify,
where $x \in \X$ encodes the features of an individual.
Let $\D$ denote the data distribution over individuals and labels supported on $\X \times \set{-1,1}$; denote by
$x,y \sim \D$ a random draw from this distribution.
Additionally, let $\M$ denote the metric sample
distribution over pairs of individuals.  For a subset
$S \subseteq \X \times \X$, we denote
by $(x,x') \sim S$ a random draw from the
distribution $\M$ conditioned on $(x,x') \in S$.
Let $d:\X\times \X\to[0,2]$ denote
the underlying fairness metric that maps pairs to
their associated distance.\footnote{In fact,
all of our results hold for a more general class
of non-negative symmetric distance functions.}

Our learning objective will be to minimize
the expectation over $\D$ of
some convex loss function $L:[-1,1]\times[-1,1] \to \R_+$
over a convex hypothesis class $\F$, subject
to the fairness constraints.
We focus on agnostically learning the hypothesis class of
linear functions with bounded weight; for some
constant $B > 0$, let $\F = [-B,B]^n$.
For $w \in \F$, define $f_w(x) = \langle w,x\rangle$, projecting $f_w(x)$ onto $[-1,1]$ to get a valid prediction.
We assume $\norm{x}_1 \le 1$ for all $x \in \X$;
this is without loss of generality, by normalizing
and increasing $B$ appropriately.

The focus on linear functions is not too
restrictive; in particular, by increasing the
dimension to $n' = O(n^k)$, we can learn any
degree-$k$ polynomial function of the original
features.  By increasing $k$, we can approximate
increasingly complex functions.

\subsection{Metric multifairness}
We define our relaxation of metric fairness
with respect to a rich class of statistical tests on the pairs of individuals.
Let a \emph{comparison} be any subset of the
pairs of $\X \times \X$.
Our definition, which we call \emph{metric multifairness}, is
parameterized by a collection of
comparisons $\C \subseteq 2^{\X \times \X}$
and requires that a hypothesis
appear Lipschitz according to all of the statistical tests defined by
the comparisons $S \in \C$.

\begin{definition}[Metric multifairness]
Let $\C \subseteq 2^{\X \times \X}$ be a collection of
comparisons and let $d:\X \times \X \to [0,2]$ be a metric.
For some constants $\tau \ge 0$, a hypothesis $f$ is
\emph{$(\C,d,\tau)$-metric multifair}
if for all $S \in \C$,
\begin{equation}
\E_{(x,x') \sim S}\big[\card{f(x)-f(x')}\big]
\le \E_{(x,x') \sim S}\big[d(x,x')\big] + \tau.
\end{equation}
\end{definition}
To begin, note that metric multifairness is indeed
a relaxation of metric fairness; if we take the
collection $\C = \set{\set{(x,x')} : x,x' \in \X\times\X}$
to be the collection of all pairwise comparisons,
then $(\C,d,\tau)$-metric multifairness is
equivalent to $(d,\tau)$-metric fairness.

In order to achieve metric multifairness
from a small sample from the metric, however,
we need a lower bound on
the density of each comparison in $\C$; in particular,
we can't hope to enforce
metric fairness from a small sample.
For some $\gamma > 0$,
we say that a collection of comparisons $\C$ is \emph{$\gamma$-large}
if for all $S \in \C$,
$\Pr_{(x,x') \sim \M}[(x,x') \in S] \ge \gamma$.
A natural next choice for $\C$ would be a collection
of comparisons that represent the Cartesian products
between traditionally-protected groups, defined by
race, gender, etc.  In this case, as long as the
minority populations are not too small, then a random
sample from the metric will concentrate around the
true expectation, and we could hope to enforce
this statistical relaxation of metric fairness.
While this approach is information-theoretically
feasible, its protections are very weak.

To highlight this weakness, suppose we want to predict the
probability individuals will repay a loan, and our metric
is an adjusted credit score.  Even after adjusting scores,
two populations $P,Q \subseteq \X$ (say, defined by race)
may have large average distance because \emph{overall} $P$
has better credit than $Q$; still, within $P$ and $Q$,
there may be significant \emph{subpopulations}
$P' \subseteq P$ and $Q'\subseteq Q$ that should be
treated similarly (possibly representing the
qualified members of each group).
In this case, a coarse statistical relaxation
of metric fairness will not require that a classifier
treat $P'$ and $Q'$ similarly; instead, the classifier could
treat everyone in $P$ better than everyone in $Q$ --
including treating \emph{unqualified} members of $P$
better than \emph{qualified} members of $Q$.  Indeed,
the weaknesses of broad-strokes statistical
definitions served as motivation for the original
work of \cite{fta}.
We would like to choose a class $\C$ that
strengthens the fairness guarantees of metric
multifairness, but maintains its efficient sample
complexity.

\paragraph{Computationally-bounded awareness.}
While we can define metric multifairness with respect
to any collection $\C$,
typically, we will think of $\C$ as a rich class of
overlapping subsets; equivalently, we can think of
the collection $\C$ as
an expressive class of boolean functions, where for $S \in \C$,
$c_S(x,x') = 1$ if and only if $(x,x') \in S$.
In particular, $\C$ should be much more expressive
than simply defining comparisons across
traditionally-protected groups.
The motivation for choosing such an expressive class $\C$
is exemplified in the following proposition.
\begin{proposition}
\label{prop:avg}
Suppose there is some $S \in \C$, such that
$\E_{(x,x') \sim S}[d(x,x')] \le \eps$.
Then if $f$ is $(\C,d,\tau)$-metric multifair,
then $f$ satisfies $(d,(\eps+\tau)/p)$-metric fairness
for at least a $(1-p)$-fraction of the pairs in $S$.
\end{proposition}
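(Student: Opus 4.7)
The proof is a direct application of Markov's inequality to the multifairness constraint on the specific comparison $S$. Here is the plan.

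First, I would unpack what metric multifairness gives on the single comparison $S$ in question. By definition of $(\C,d,\tau)$-metric multifairness applied to $S \in \C$, combined with the hypothesis $\E_{(x,x') \sim S}[d(x,x')] \le \eps$, we immediately get
\begin{equation*}
\E_{(x,x') \sim S}\big[\card{f(x)-f(x')}\big] \;\le\; \E_{(x,x') \sim S}[d(x,x')] + \tau \;\le\; \eps + \tau.
\end{equation*}

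Next, since $\card{f(x)-f(x')}$ is a nonnegative random variable under the conditional distribution on $S$, Markov's inequality yields
\begin{equation*}
\Pr_{(x,x') \sim S}\!\left[\card{f(x)-f(x')} \ge (\eps+\tau)/p\right] \;\le\; p.
\end{equation*}
Consequently, on at least a $(1-p)$-fraction of pairs in $S$, we have $\card{f(x)-f(x')} \le (\eps+\tau)/p$. Because $d(x,x') \ge 0$ everywhere, this trivially upgrades to
\begin{equation*}
\card{f(x)-f(x')} \;\le\; d(x,x') + (\eps+\tau)/p,
\end{equation*}
which is exactly the pointwise $(d,(\eps+\tau)/p)$-metric fairness condition, completing the argument.

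There is no real obstacle; the only step requiring any thought is recognizing that metric multifairness only bounds the average of $\card{f(x)-f(x')}$, so to conclude a pointwise guarantee one has to concede either a weaker slack ($\tau$ blows up by $1/p$) or a failure fraction ($p$). Markov is the natural and essentially tight tool for this tradeoff; one could alternatively phrase the bound in terms of the nonnegative quantity $\max(0,\card{f(x)-f(x')} - d(x,x'))$, but this yields the same conclusion since $\E[d] \ge 0$ only helps.
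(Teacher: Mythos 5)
Your proof is correct: combining the multifairness constraint on $S$ with the small-average-distance hypothesis and then applying Markov's inequality (and nonnegativity of $d$) is exactly the intended argument, which the paper leaves implicit. The only minor care needed is that the ``fraction of pairs in $S$'' is measured under the conditional distribution $(x,x') \sim S$, which is precisely the measure your Markov step uses, so nothing is missing.
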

That is, if there is some subset $S \in \C$
that identifies a set of pairs whose metric distance
is small, then any metric multifair hypothesis
must also satisfy the stronger individual metric fairness notion
on many pairs from $S$.  This effect will compound if
many different (possibly overlapping) comparisons
are identified that have small average distance.
We emphasize that these small-distance
comparisons are not known before sampling from the
metric; indeed, this
would imply the metric was (approximately) known
\emph{a priori}.
Still, if the class $\C$ is rich enough to correlate
well with various comparisons that reveal
significant information about the metric,
then any metric multifair hypothesis will satisfy
\emph{individual-level} fairness on a significant
fraction of the population!

While increasing the expressiveness of $\C$ increases
the strength of the fairness guarantee, in order
to learn from a small sample,
we cannot choose $\C$ to be arbitrarily complex.
Thus, in choosing $\C$ we must balance the strength
of the fairness guarantee with the information
bottleneck in accessing $d$ through random
samples.  Our resolution to these competing
needs is complexity-theoretic: while
information-theoretically, we can't hope
to ensure fair treatment across
\emph{all} subpopulations, we can hope ensure fair
treatment across \emph{efficiently-identifiable}
subpopulations.
For instance, if we take $\C$ to be a family defined
according to some class of computations of bounded
dimension -- think, the set of conjunctions of a constant
number of boolean features or short decision trees --
then we can hope to accurately estimate and enforce
the metric multifairness conditions.
Taking such a bounded $\C$
ensures that a hypothesis will be fair on all comparisons
identifiable within this computational bound.
This is the sense in which metric multifairness
provides fairness through \emph{computationally-bounded} awareness.

\subsection{Learning model}

\paragraph{Metric access.}
Throughout, our goal is to learn a hypothesis from
noisy samples from the metric that satisfies multifairness.
Specifically, we assume an algorithm can obtain a small
number of independent random \emph{metric samples}
$(x,x',\Delta(x,x')) \in \X\times\X\times[0,2]$
where $(x,x') \sim \M$ is drawn at random over the
distribution of pairs of individuals, and $\Delta(x,x')$
is a random variable of bounded magnitude with
$\E[\Delta(x,x')] = d(x,x')$.

We emphasize that this is a very limited access model.
As Theorem~\ref{thm:SGD} shows we achieve
$(\C,d,\tau)$-metric multifairness from
a number of samples that depends \emph{logarithmically}
on the size of $\C$ independent of the complexity of the similarity metric.\footnote{Alternatively, for continuous
classes of $\C$, we can replace $\log(\card{\C})$ with
some notion of dimension (VC-dimension, metric entropy, etc.)
through a uniform convergence argument.}
Recall that $d:\X\times\X\to [0,2]$ can be an
\emph{arbitrary} symmetric function;
thus, the learner does not necessarily have
enough information to learn $d$.
Still, for exponentially-sized $\C$,
the learner can guarantee metric
multifairness from a polynomial-sized sample,
and the strength of the guarantee
will scale up with the complexity of $\C$
(as per Propsition~\ref{prop:avg}).

In order to ensure a strong notion of fairness,
we assume that the subpopulations we wish to
protect are well-represented in the pairs drawn
from $\M$.  This assumption, while important, is
not especially restrictive, as we think of the
metric samples as coming from a regulatory
committee or ethically-motivated party;
in other words, in practical settings, it is
reasonable to assume that one can choose the metric
sampling distribution based on the notion of
fairness one wishes to enforce.

\paragraph{Label access.}
When we learn linear families,
our goal will be to learn from a sample of labeled examples.
We assume the algorithm can ask for
independent random samples $x,y \sim \D$.

\paragraph{Measuring optimality.} To evaluate the utility guarantees of our learned predictions,
we take a comparative approach.
Suppose $\H \subseteq 2^{\X\times \X}$ is
a collection of comparisons.  For $\eps \ge 0$,
we say a hypothesis $f$ is \emph{$(\H,\eps)$-optimal} with respect to $\F$, if
\begin{equation}
\E_{x,y \sim \D}\left[L(f(x),y)\right]
\le \E_{x,y \sim \D}\left[L(f^*(x),y)\right] + \eps
\end{equation}
where $f^* \in \F$ is an optimal $(\H,d,0)$-metric multifair hypothesis.

\section{Learning a metric multifair hypothesis}
\label{sec:learning}
As in \cite{fta}, we formulate the problem of learning a fair set of predictions
as a convex program.
Our objective is to minimize the expected loss
$\E_{x,y \sim \D}[L(f(x),y)]$,
subject to the multifairness constraints defined
by $\C$.\footnote{For the sake of presentation,
throughout the theorem statements, we will assume that $L$ is $O(1)$-Lipschitz
on the domain of legal predictions/labels to guarantee bounded error; our
results are proved more generally.}
Specifically, we show that a simple variant of stochastic gradient
descent due to \cite{nesterov}
learns such linear families efficiently.

\begin{theorem}
\label{thm:SGD}
Suppose $\gamma,\tau,\delta > 0$ and $\C \subseteq 2^{\X \times \X}$ is
$\gamma$-large.
With probability at least $1-\delta$, stochastic switching subgradient
descent learns a hypothesis $w \in \F$ that is
$(\C,d,\tau)$-metric multifair and $(\C,O(\tau))$-optimal with
respect to $\F$ in $O\left(\frac{B^2n^2\log(n/\delta)}{\tau^2}\right)$
iterations from $m = \tilde{O}\left(\frac{\log(\card{\C}/\delta)}{\gamma\tau^2}\right)$
metric samples.  Each iteration uses at most $1$ labeled
example and can be implemented in
$\tilde{O}\left(\card{C} \cdot n \cdot \poly(1/\gamma,1/\tau,\log(1/\delta)) \right)$
time.
\end{theorem}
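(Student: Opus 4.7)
The plan is to formulate metric multifair learning as a convex constrained optimization problem over $w \in \F$ and then invoke Nesterov's stochastic switching subgradient method. First I would write the objective as $\E_{x,y \sim \D}[L(f_w(x),y)]$, which is convex in $w$ because $L$ is convex and $f_w$ is linear, subject to the $\card{\C}$ convex inequality constraints
\begin{equation}
g_S(w) \;:=\; \E_{(x,x') \sim S}\big[\card{f_w(x) - f_w(x')}\big] - \E_{(x,x') \sim S}[d(x,x')] \;\le\; \tau/2
\end{equation}
for each $S \in \C$. Each $g_S$ is convex since the absolute value of a linear function is convex and expectation preserves convexity, and $\F$ is a convex box. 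The optimal multifair comparator $f^*$ is strictly feasible (with $\tau = 0$), so it remains feasible after the slack is introduced.

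Second, I would replace the unknown right-hand sides $\E_{(x,x')\sim S}[d(x,x')]$ by empirical estimates over a single pool of $m$ i.i.d.\ metric samples. Since $\C$ is $\gamma$-large, each $S \in \C$ receives $\Omega(m\gamma)$ samples in expectation; a Hoeffding bound together with a union bound over $\C$ shows that with $m = \tilde O(\log(\card{\C}/\delta)/(\gamma\tau^2))$ samples, every empirical expectation on the RHS is within $\tau/4$ of its true value simultaneously. This is the only place $\card{\C}$ enters the sample complexity, giving the stated logarithmic dependence.

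Third, I would execute Nesterov's switching subgradient scheme. At iteration $t$, estimate each $g_S(w_t)$ on the fixed metric pool; if some $S$ is violated by more than $\tau/2$, take a step along the subgradient $\grad \card{f_w(x)-f_w(x')} = \sgn(f_w(x)-f_w(x'))\,(x-x')$ averaged over the samples in $S$; otherwise draw a fresh $(x,y) \sim \D$ and take a stochastic subgradient step on the loss. Project onto $\F = [-B,B]^n$ after each step. The standard analysis of the switching scheme guarantees that the Polyak average $\bar w$ over the ``productive'' iterations is both approximately feasible in the empirical program (hence $(\C,d,\tau)$-multifair for the true $d$, after folding in the $\tau/4$ estimation slack) and attains loss within $O(\tau)$ of any feasible comparator — in particular $f^*$. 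The iteration count $O(B^2 n^2 \log(n/\delta)/\tau^2)$ follows from standard SGD bounds: the $\ell_2$-diameter of $\F$ is $O(B\sqrt n)$, subgradient norms are bounded by $\norm{x}_1,\norm{x-x'}_1 \le 2$ times Lipschitzness of $L$, so each coordinate contributes $O(n)$, and the $\log(n/\delta)$ comes from high-probability conversion of stochastic gradient noise via a martingale Azuma bound.

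The main obstacle I expect is to budget three distinct sources of error cleanly so that the same $\tau$ appears in both the multifairness slack and the utility bound: (i) the $\tau/4$ uniform concentration slack on constraint estimates, (ii) the $O(\tau)$ optimization error from the switching scheme, and (iii) the stochastic noise in the single-example loss gradients. One must show that $f^*$ remains feasible in the empirical program (so that the $(\C,O(\tau))$-optimality comparison is legitimate) while the output remains multifair against the \emph{true} metric $d$. The per-iteration runtime $\tilde O(\card{\C}\cdot n \cdot \poly(1/\gamma,1/\tau,\log(1/\delta)))$ is then immediate from naively scanning $\C$ to locate a violated constraint on the precomputed metric pool.
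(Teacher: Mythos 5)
Your proposal matches the paper's proof in all essentials: the same convex program with a fixed pool of metric samples whose right-hand sides are estimated once via Hoeffding plus a union bound over the $\gamma$-large class $\C$, the same switching subgradient scheme that averages the feasible ("productive") iterates, the same error budget splitting estimation tolerance from optimization error, and the same naive scan over $\C$ for the per-iteration cost. The paper simply writes out the "standard analysis" you invoke — a Lagrangian duality-gap bound with explicitly constructed dual multipliers, a potential-function argument for each step, and an Azuma bound for the stochastic subgradient noise — arriving at the same $T = O(B^2 n^2 \log(n/\delta)/\tau^2)$ via $M^2 = O(n)$ and diameter $O(B\sqrt{n})$.
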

Note that the metric sample complexity depends
logarithmically on $\card{\C}$.  Thus, information-theoretically,
we can hope to enforce metric mutlifairness with a class
$\C$ that grows exponentially and still be efficient.
While the running time of each iteration depends on $\card{\C}$, note that the number of iterations
is \emph{independent} of $\card{\C}$.  In Section~\ref{sec:agnostic}, we show conditions on
$\card{\C}$ under which we can speed up the
running time of each iteration to depend logarithmically
on $\card{\C}$.

\begin{figure}[b!]
{\refstepcounter{algorithm} \label{alg:switching}{\bf Algorithm~\thealgorithm:} Switching Subgradient Descent}

\fbox{\parbox{\textwidth}{
Let $\tau > 0$, $T \in \N$, and $\C \subseteq 2^{\X \times \X}$.\\
Initialize $w_0 \in \F = [-B,B]^n; W = \emptyset$\\
For $k = 1,\hdots, T$:
\begin{itemize}
\item If $\exists S \in \C$ such that $\hat{R}_S(w_k) > 4\tau/5$: \hfill
\texttt{// some constraint violated}
\begin{itemize}
  \item $S_k \gets$ any $S \in \C$ such that $\hat{R}_S(w_k) > 4\tau/5$
  \item $w_{k+1} \gets w_k - \frac{\tau}{M^2} \grad R_{S_k}(w_k)$ \hfill \texttt{/* step according to constraint}\\ \null\hfill \texttt{ project onto F if necessary */}
\end{itemize}
\item Else: \hfill \texttt{// no violations found}
\begin{itemize}
  \item $W \gets W \cup \set{w_k}$\hfill \texttt{// update set of feasible iterates}
  \item $w_{k+1} \gets w_k - \frac{\tau}{GM} \grad L(w_k)$ \hfill \texttt{/* step according to objective}\\ \null\hfill \texttt{ project onto F if necessary */}
\end{itemize}
\end{itemize}
Output $\bar{w} = \frac{1}{\card{W}} \cdot \sum_{w \in W} w$
\hfill\texttt{// output average of feasible iterates}
}
}
\end{figure}

We give a description of the switching
subgradient method in Algorithm~\ref{alg:switching}.
At a high level, at each iteration, the procedure checks to
see if any constraint is significantly violated.  If it finds
a violation, it takes a (stochastic) step towards feasibility.
Otherwise, it steps according a stochastic subgradient for the objective.

For convenience of analysis, we define the residual
on the constraint defined by $S$ as follows.
\begin{equation}
R_{S}(w) = \E_{(x,x') \sim S}\big[\card{f_w(x) - f_w(x')}\big]
- \E_{(x,x') \sim S}\big[d(x,x') \big]
\end{equation}
Note that $R_S(w)$ is convex in the predctions $f_w(x)$ and
thus, for linear families is convex in $w$.
We describe the algorithm assuming access to the
following estimators, which we can implement
efficiently (in terms of time and samples).
First, we assume we can estimate the residual $\hat{R}_S(w)$ on each $S \in \C$
with tolerance $\tau$ such that
for all $w \in \F$,
$\card{R_S(w) - \hat{R}_S(w)} \le \tau$.
Next, we assume access to a stochastic subgradient oracle for the constraints
and the objective.
For a function $\phi(w)$, let $\partial \phi(w)$ denote the set of
subgradients of $\phi$ at $w$.  We abuse notation, and let $\grad \phi(w)$ refer to a vector-valued random variable
where $\E[\grad \phi(w) \given w] \in \partial \phi(w)$.
We assume access to stochastic subgradients
for $\partial R_S(w)$
for all $S \in \C$ and $\partial L(w)$.
We include a full analysis of the algorithm and
proof of Theorem~\ref{thm:SGD} in
the Appendix.

\subsection{Post-processing for metric multifairness}

\label{sec:post}
One specific application of this result is as a way post-process
learned predictions to ensure fairness.
In particular, suppose we are given the predictions
from some pre-trained model for $N$ individuals, but
are concerned that these predictions may not be fair.
We can use Algorithm~\ref{alg:switching} to
post-process these labels to select near-optimal metric
multifair predictions.  Note these
predictions will be optimal with respect to
the \emph{unconstrained} family of predictions
-- not just predictions that come from
a specific hypothesis class (like linear functions).

Specifically, in this setting
we can represent an unconstrained set of predictions
as a linear hypothesis in $N$ dimensions:  take $B = 1$,
and let the feature vector for $x_i \in \X$ be the $i$th standard
basis vector.
Then, we can think of the input labels to Algorithm~\ref{alg:switching}
to be the \emph{output} of
any predictor that was trained separately.\footnote{Nothing in our analysis required labels $y \in \set{-1,1}$; we can instead take the labels $y \in [-1,1]$.}  For instance, if
we have learned a highly-accurate model, but are unsure of its
fairness, we can instantiate our framework with, say, the squared loss
between the original predictions and the returned predictions; then, we can
view the program as a procedure to \emph{project} the highly-accurate predictions onto
the set of metric multifair predictions. Importantly, our procedure
only needs a small set of samples from the metric and \emph{not}
the original data used to train the model.

Post-processing prediction models for fairness
has been studied in a few contexts \cite{woodworth2017learning,multi,kgz}.
This post-processing setting should be
contrasted to these settings.  In our setting,
the predictions are not required to generalize
out of sample (in terms of loss or fairness).
On the one hand, this means the metric
multifairness guarantee does not generalize
outside the $N$ individuals;
on the other hand, because the predictions
need not come from a bounded hypothesis class, 
their utility can only
improve compared to learning a metric multifair
hypothesis directly.

In addition to preserving the utility of previously-trained classifiers,
separating the tasks of training for utility and enforcing fairness
may be desirable when intentional malicious discrimination may be anticipated.
For instance, when addressing the forms of
racial profiling that can occur through targeted
advertising (as described in \cite{fta}), we may
not expect self-interested advertisers to
adhere to classification under strict fairness
constraints, but it stands to reason that prominent advertising platforms
might want to prevent such blatant abuses of their platform.  In this setting, the platform could
impose metric multifairness after the advertisers
specify their ideal policy.

\section{Reducing search to agnostic learning}
\label{sec:agnostic}
As presented above,
the switching subgradient descent method converges to a
nearly-optimal point in a bounded number of subgradient steps,
independent of $\card{\C}$.  The catch is that at the beginning
of each iteration, we need to search over $\C$ to determine
if there is a significantly violated multifairness constraint.
As we generally want to take $\C$ to
be a rich class of comparisons, in many cases $\card{\C}$ will be
prohibitive.  As such, we would hope to find violated constraints
in sublinear time, preferably even poly-logarithmic in $\card{C}$.
Indeed, we show that if a concept class $\C$ admits an efficient
agnostic learner, then we can solve the violated constraint search problem
over the corresponding collection of comparisons efficiently.

Agnostic learning can be phrased as a problem of detecting correlations.
Suppose $g,h: \U \to [-1,1]$, and let $\D$ be some distribution supported
on $\U$.  We denote the correlation between $g$ and $h$ on $\D$ as
$\langle g,h \rangle = \E_{i \sim \D}[g(i)\cdot h(i)]$.
We let $\C \subseteq [-1,1]^\U$ denote the \emph{concept class}
and $\H \subseteq [-1,1]^\U$ denote the \emph{hypothesis class}.
The task of agnostic learning can be stated as follows: given sample access
over some distribution $(i,g(i)) \sim \D\times [-1,1]$
to some function $g \in [-1,1]^N$, find some hypothesis
$h \in \H$ that is comparably correlated with $g$ as the best $c \in \C$.
That is, given access to $g$, an agnostic learner with accuracy $\eps$
for concept class $\C$ returns some $h$ from the hypothesis class $\H$
such that
\begin{equation}
\langle g,h \rangle + \eps \ge \max_{c \in \C}\langle g,c \rangle.
\end{equation}
An agnostic learner is
typically considered efficient if it runs in polynomial time in
$\log(\card{\C})$ (or an appropriate notion of dimension of $\C$),
$1/\eps$, and $\log(1/d)$.  Additionally, distribution-specific
learners and learners with query access to the function have been studied
\cite{gopalan2008agnostically,feldman2010distribution}.
In particular, membership queries tend to make agnostic learning easier.
Our reduction does not use any metric samples other than those that the
agnostic learner requests.  Thus, if we are able to query a panel
of experts according to the learner, rather than randomly, then
an agnostic learner that uses queries could be used to speed up
our learning procedure.

\begin{theorem}
Suppose there is an algorithm $\A$ for agnostic learning
the concept class $\C$ with hypothesis class $\H$
that achieves accuracy $\eps$ with probability
$1-\delta$ in time $T_\A(\eps,\delta)$ from $m_\A(\eps,\delta)$ labeled samples.
Suppose that $\C$ is $\gamma$-large.
Then, there is an algorithm that, given access to $T = \tilde{O}\left(
\frac{B^2n^2}{\tau^2}\right)$ labeled
examples,
outputs a set of predictions that are $(\C,d,\tau)$-metric multifair 
and $(\H,O(\tau))$-optimal with respect to $\F = [-B,B]^n$
that runs in time
$\tilde{O}\left(\frac{T_\A(\gamma\tau,\delta/T) \cdot B^2n^2}{\gamma^2\tau^2}\right)$
and requires
$m = \tilde{O}\left(\frac{\log(\card{\C})}{\gamma \tau^2} +
\frac{n_\A(\gamma\tau,\delta/T)}{\gamma\tau^2}\right)$ metric samples.
\end{theorem}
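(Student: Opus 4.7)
The plan is to show that the violated-constraint search at the top of each iteration of Algorithm~\ref{alg:switching} can be replaced by a single invocation of the agnostic learner $\A$, yielding an oracle-efficient variant of switching subgradient descent whose per-iteration cost scales with $T_\A$ rather than $\card{\C}$. The central observation is that detecting a violated multifairness constraint is, up to normalization, exactly the problem of finding a concept in $\C$ whose indicator correlates with a particular signed label function on $\X \times \X$.

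Concretely, at the current iterate $w$ define the label function $g_w(x,x') = \card{f_w(x) - f_w(x')} - d(x,x')$; a single metric sample $(x,x',\Delta(x,x'))$ gives a bounded unbiased estimator of $g_w(x,x')$. For any $S \in \C$ with indicator $c_S$, one has $\E_{(x,x')\sim\M}[g_w\cdot c_S] = \Pr_\M[S]\cdot R_S(w)$, so by $\gamma$-largeness a violation $R_S(w) > 4\tau/5$ is equivalent to $\langle g_w, c_S\rangle_\M > (4/5)\gamma\tau$. Calling $\A$ on these noisy labels at accuracy $\eps$ equal to a small constant times $\gamma\tau$ therefore returns $h \in \H$ that provides a dichotomy: either $\langle g_w, h\rangle_\M$ is above a threshold $\Theta(\gamma\tau)$, in which case $h$ witnesses a violation and serves as a soft comparison direction for a subgradient step; or $\langle g_w, h\rangle_\M$ is below threshold, in which case the agnostic guarantee forces $\max_{c \in \C}\langle g_w, c\rangle_\M \le O(\gamma\tau)$, and dividing by $\Pr_\M[S] \ge \gamma$ yields $R_S(w) \le \tau$ for every $S \in \C$, i.e., approximate $(\C, d, \tau)$-multifairness.

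Slotted into Algorithm~\ref{alg:switching}, the procedure becomes: at each step invoke $\A$; if the returned $h$ exhibits large correlation, step on the convex residual $R_h(w) = \E_\M[\card{f_w(x) - f_w(x')}\cdot h(x,x')] - \E_\M[d(x,x')\cdot h(x,x')]$ via a stochastic subgradient estimated from a single $h$-reweighted metric sample; otherwise record $w$ as feasible and step on the loss using one fresh labeled example. The iteration count follows the analysis of Theorem~\ref{thm:SGD} applied to the relaxed constraint family $\set{R_h : h \in \H}$, giving $T = \tilde{O}(B^2 n^2/\tau^2)$. Each iteration consumes $T_\A(\gamma\tau, \delta/T)$ time and $m_\A(\gamma\tau, \delta/T)$ metric samples, while an additive $\tilde{O}(\log\card{\C}/(\gamma\tau^2))$ metric samples are needed once to argue, via uniform convergence over $\C$, that the empirical correlation certificate translates into a population-level $(\C, d, \tau)$-multifairness guarantee.

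The main obstacle is reconciling the mismatch between the class in which feasibility is enforced (the hypothesis class $\H$) and the class for which multifairness is claimed ($\C$). The fairness direction is handled cleanly by the agnostic guarantee above. The optimality direction requires more care: the baseline $f^*$ in the convergence proof must be feasible for every $R_h$ the procedure ever steps on, so the strongest competitive guarantee we can claim is against hypotheses that are $(\H, d, 0)$-multifair, matching the stated $(\H, O(\tau))$-optimality rather than the stronger $(\C, O(\tau))$-optimality of Theorem~\ref{thm:SGD}. Pinning this down requires deciding whether $\H$ is viewed as $\set{0,1}$-valued comparisons (in which case $(\H, d, 0)$-multifairness immediately implies $R_h(f^*) \le 0$ for all $h \in \H$) or as general $[-1,1]$-valued functions (in which case $h$ must be split into positive and negative parts before comparing, with signs absorbed into the definitions). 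Modulo this bookkeeping, the result reduces in a black-box fashion to the analysis of Theorem~\ref{thm:SGD}.
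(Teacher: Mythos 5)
Your proposal follows essentially the same route as the paper's proof: label each pair with $v(x,x') = |f_w(x)-f_w(x')| - d(x,x')$, use $\gamma$-largeness to convert a violated constraint into an $\Omega(\gamma\tau)$ correlation with the corresponding concept, run the agnostic learner at accuracy roughly $\gamma\tau$, and step on the returned $h$ as a soft comparison, inheriting $(\C,d,\tau)$-multifairness and $(\H,O(\tau))$-optimality from the Algorithm~\ref{alg:switching} analysis. The bookkeeping you leave open is resolved in the paper by the affine shift $S_h(x,x') = \frac{h(x,x')+1}{2}$ together with tracking $\rho = R_{\X\times\X}(w)$ as a constant offset; also note that since violations are only certified at level $\Omega(\gamma\tau)$, the per-step duality-gap progress is $\Omega(\gamma^2\tau^2)$, which is where the extra $1/\gamma^2$ in the stated running time comes from (your $T=\tilde{O}(B^2n^2/\tau^2)$ is the labeled-example count, not the iteration count).
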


When we solve the convex program with switching subgradient
descent, at the beginning of each iteration,
we check if there is any $S \in \C$ such that the residual quantity
$R_S(w)$ is greater than some threshold.
If we find such an $S$,
we step according to the subgradient of $R_S(w)$.
In fact, the proof of the convergence of switching
subgradient descent reveals that
as long as when there is some $S \in \C$ where
$R_S(w)$ is in violation, we can find some
$R_{S'}(w) > \rho$ for some constant $\rho$,
where $S' \in \H \subseteq [-1,1]^{\X\times\X}$,
then we can argue that
the learned hypothesis will be $(\C,d,\tau)$-metric
multifair and achieve utility commensurate with the
best $(\H,d,0)$-metric multifair hypothesis.

We show a general reduction from the problem of searching
for a violated comparison $S \in \C$ to the problem of agnostic
learning over the corresponding family of boolean functions.
In particular, recall that for a collection of
comparisons $\C \subseteq 2^{\X\times \X}$,
we can also think of $\C$ as a family of boolean concepts,
where for each $S \in \C$, there is an associated boolean
function $c_S:\X\times\X \to \set{-1,1}$ where $c_S(x_i,x_j) = 1$ if and only
if $(x_i,x_j) \in S$.
We frame this search problem as an agnostic learning problem, where
we design a set of ``labels'' for each pair $(x,x') \in \X\times\X$
such that any function that is highly correlated with these labels
encodes a way to update the parameters towards feasibility.
\begin{proof}
Recall the search problem: given a current hypothesis $f_w$, is
there some $S \in \C$ such that
$R_S(w) = \E_{(x,x')\sim S}[\card{f_w(x_i) - f_w(x_j)} - d(x_i,x_j)] > \tau$?
Consider the labeling each pair $(x_i,x_j)$ with
$v(x_i,x_j) = \card{f_w(x_i) - f_w(x_j)} - d(x_i,x_j)$.
Let $\rho = R_{\X\times \X}(w)$; note that
we can treat $\rho$ as a constant for all $S \in \C$.
Further, suppose $S \in \C$ is such that $R_S(w) > \tau$, or
equivalently, $\E_{(x_i,x_j) \sim S}[v(x_i,x_j)] > \tau$.
Then, by the assumption that $\C$ is $\gamma$-large, the correlation
between the corresponding boolean function $c_S$ and labels $v$ can be lower 
bounded as $\langle c_S,v \rangle > 2\gamma \tau - \rho$.
Suppose we have an agnostic learner that returns a hypothesis
$h$ with $\eps < \gamma \tau$ accuracy.  Then, we know that
$\langle h, v \rangle \ge \gamma \tau - \rho$
by the lower bound on the optimal $c_S \in \C$.
Then, consider the function $R_{h}(w)$
defined as follows.
\begin{align}
R_{S_h}(w) &= \E_{(x,x') \sim \X\times\X}\left[\left(\frac{h(x,x')+1}{2}\right)\cdot
v(x,x')\right]\\
&= \frac{\langle h,v \rangle +\rho}{2}
\end{align}
Thus, given that there exists some $S \in \C$ where $R_S(w) > \tau$,
we can find some real-valued comparison $S_h(x,x') = \frac{h(x,x')+1}{2}$,
such that $R_{S_h}(w) = \Omega(\langle h,v \rangle + \rho) \ge \Omega(\gamma\tau)$.
\end{proof}
Discovering a violation of at least $\Omega(\gamma\tau)$ guarantees
$\Omega(\gamma^2\tau^2)$ progress in the duality gap
at each step, so the theorem follows from the
analysis of Algorithm~\ref{alg:switching}.

\section{Hardness of learning metric multifair hypotheses}
\label{sec:lb}

In this section, we show that our algorithmic results
cannot be improved significantly.  In particular,
we focus on the post-processing setting of
Section~\ref{sec:post}.
We show that the metric sample complexity
is tight up to a $\Omega(\log\log(\card{\C}))$ factor
unconditionally.
We also show that some
learnability assumption on $\C$ is necessary in order
to achieve a high-utility $(\C,d,\tau)$-metric
multifair predictions efficiently.
In particular, we give a reduction from
inverting a boolean concept $c \in \C$
to learning a hypothesis $f$ that is
metric multifair on a collection $\H$ derived from $\C$,
where the metric samples
from $d$ encode information about the concept $c$.
Recall, that for any $\H$ and $d$, we can always output a trivial
$(\H,d,0)$-metric multifair hypothesis by outputting
a constant hypothesis.  This leads to a subtlety in our
reductions, where we need to leverage the learner's
ability to simultaneously satisfy the metric
multifairness constraints and achieve high utility.

Both lower bounds follow the same general construction.
Suppose we have some boolean concept class
$\C \subseteq \set{-1,1}^{\X_0}$ for some universe $\X_0$.
We will construct a new universe $\X = \X_0 \cup \X_1$
and define a collection of ``bipartite'' comparisons over subsets of
${\X_0 \times \X_1}$. Then, given samples from $(x_0,c(x_0))$,
we define corresponding metric values
where $d(x_0,x_1)$ is some function of $c(x_0)$ for
all $x_1 \in \X_1$.  Finally, we need to additionally
encode the objective of inverting $c$ into labels for
$x_0 \in \X_0$, such that to obtain good loss, the post-processor must invert $c$ on $\X_0$.
We give a full description of
the reduction in the Appendix.

\paragraph{Lower bounding the sample complexity.}

While we argued earlier that
\emph{some} relaxation of metric fairness is necessary if we want to
learn from a small set of metric samples, it is not clear
that multifairness with respect to $\C$ is the strongest relaxation
we can obtain.  In particular, we might hope to guarantee fairness on
\emph{all} large comparisons, rather than just a finite class $\C$.
The following theorem shows that such a hope is misplaced: in order
for an algorithm to guarantee that the Lipschitz condition holds in
expectation over a finite
collection of large comparisons $\C$, then either
the algorithm takes $\Omega(\log\card{\C})$
random metric samples, or the algorithm
outputs a set of nearly useless predictions.
For concreteness, we state the theorem in the
post-processing setting of Section~\ref{sec:post};
the construction can be made to work in the learning
setting as well.
\begin{theorem}
\label{thm:lb}
Let $\gamma,\tau > 0$ be constants and
suppose $\A$ is an algorithm that
has random sample access to $d$ and outputs
a $(\C,d,\tau)$-metric multifair set of predictions
for $\gamma$-large $\C$.
Then, $\A$ takes $\Omega(\log\card{\C})$
random samples from $d$ or outputs a set of predictions
with loss that approaches the loss achievable with no metric
queries.
\end{theorem}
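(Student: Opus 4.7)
The plan is to instantiate the general reduction template hinted at in the paper: encode a uniformly random hidden concept $c^* \in \G$ into the metric $d$ so that (i) knowing $c^*$ yields a metric multifair hypothesis of loss $0$, while (ii) each random metric query reveals only $O(1)$ bits of $c^*$, so $m = o(\log\card{\G})$ queries are information-theoretically insufficient and the algorithm's output must match the no-query baseline up to $o(1)$.

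Concretely, take $\X_0 = [n]$ and let $\G \subseteq \set{-1,1}^{\X_0}$ be the family of exactly balanced boolean functions, so $\log\card{\G} = n - O(\log n)$. Let $\X = \X_0 \cup \set{x^+,x^-}$, and for each $c \in \G$ introduce the two ``bipartite'' comparisons $S_c^{\pm} = \set{(x_0,x^{\pm}) : c(x_0) = \pm 1}$, collected into $\C$. Under the uniform metric sample distribution on $\X_0 \times \set{x^+,x^-}$, every $S_c^{\pm}$ has mass exactly $1/4$, so $\C$ is $(1/4)$-large and $\log\card{\C} = \Theta(n)$. Draw $c^*$ uniformly from $\G$ and define $d(x_0,x^+) = 0$ if $c^*(x_0) = 1$ and $d(x_0,x^+) = 2$ otherwise, symmetrically for $x^-$. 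The post-processing inputs are $\hat{y}_{x^{\pm}} = \pm 1$, the data distribution $\D$ is uniform on $\set{x^+,x^-}$, and the loss is squared error. An oracle knowing $c^*$ picks $f^*(x_0) = c^*(x_0)$, $f^*(x^{\pm}) = \pm 1$: every multifairness constraint holds with equality (by a direct calculation over balanced $c,c^*$) and the loss is $0$.

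The heart of the lower bound is to show that for any $(\C,d,\tau)$-multifair $f$ output by an algorithm using $m$ random metric queries, $\E_{c^*}\card{f(x^+) - f(x^-)} \le 2\tau + O(m/n)$. To see this, condition on the algorithm's view $V$, which fixes $c^*$ on a random queried set $K$ of size $m$ but leaves the $n-m$ unqueried coordinates jointly uniform among balanced completions of $c^*|_K$. Averaging both multifairness constraints $\E_{S_{c^*}^{\pm}}\card{f(x_0)-f(x^{\pm})}\le\tau$ over $c^*$ consistent with $V$: on each unqueried $x_0$ the posterior marginal of $c^*(x_0)$ is $\tfrac{1}{2} \pm O(1/n)$, so the quantity $f(x_0)$ (a function of $V$ only) contributes to both the $+$ and $-$ constraints with essentially equal weight, and the triangle inequality applied pointwise yields the claimed bound. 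Consequently the expected squared loss on $\set{x^+,x^-}$ is at least $(1-\tau-O(m/n))^2$, which for $m = o(n) = o(\log\card{\C})$ matches the no-query baseline $(1-\tau)^2$ up to $o(1)$, as desired. The main obstacle is the averaging step: we must carefully handle the conditioning to ensure that $f|_{\X_0\setminus K}$ is essentially independent of $c^*|_{\X_0\setminus K}$; this follows because the metric samples' pair components $(x,x') \sim \M$ are drawn independently of $c^*$, so $V$ encodes only the values of $c^*$ on the random set $K$, and nothing else about $c^*$.
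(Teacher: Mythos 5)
Your proof is correct, and it follows the paper's general reduction template (a bipartite universe $\X_0\cup\X_1$, comparisons indexed by a concept class, and a metric encoding a hidden concept $c^*$ so that metric multifairness together with non-trivial utility forces the post-processor to learn $c^*$), but its technical core is genuinely different from the paper's. The paper takes $c^*$ to be a uniformly random $\mathbb{F}_2$-linear function and appeals to a linear-algebra fact: without roughly $n$ linearly independent random queries one cannot span the relevant space, the unqueried values stay uniform, and with $n-k$ queries the utility gap decays exponentially in $k$. You instead take $c^*$ to be a random balanced boolean function and run a direct Bayesian/information argument: conditioned on the algorithm's view $V$, the unqueried coordinates of $c^*$ have nearly unbiased marginals and are independent of the output $f$, so the averages of $f$ over $\set{c^*=1}$ and $\set{c^*=-1}$ agree up to $O(m/n)$, and the two constraints $S_{c^*}^{\pm}$ then force $f(x^+)-f(x^-)\le 2\tau+O(m/n)$ and hence loss at least $(1-\tau-O(m/n))^2$, matching the no-query baseline once $m=o(n)=o(\log\card{\C})$. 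Your route is more elementary and self-contained (no external lower bound on random queries spanning a basis), and the two-anchor-point, squared-loss design makes the utility accounting cleaner than the paper's hinge-loss setup with $\card{\X_1}=2\card{\X_0}$; the paper's route buys the sharper quantitative statement that the loss gap closes exponentially in the number of missing queries. Two small points to tighten: the posterior marginals are $\tfrac12\pm O(m/n)$, not $\tfrac12\pm O(1/n)$, once $m$ values of $c^*$ are revealed; and the ``averaging/triangle inequality'' step is cleanest if you first apply Jensen to pass from $\E_{S_{c^*}^{\pm}}\big[\card{f(x_0)-f(x^{\pm})}\big]\le\tau$ to $\card{A_{\pm}(c^*)-f(x^{\pm})}\le\tau$, where $A_{\pm}(c^*)$ are the set-averages of $f$, and then bound only the conditional first moment $\E[A_+(c^*)-A_-(c^*)\mid V]=\tfrac2n\sum_{x_0}\E[c^*(x_0)\mid V]\,f(x_0)=O(m/n)$ --- the inequality you need is one-sided, so no concentration over $c^*\mid V$ is required.
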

The construction uses a reduction from the problem
of learning a linear function;
we then appeal to a
lower bound from linear algebra on the number of random queries
needed to span a basis.

\paragraph{Hardness from pseudorandom functions.}

Our reduction implies that a post-processing algorithm
for $(\C,d,\tau)$-metric multifairness with
respect to an arbitrary metric $d$ gives us
a way of distinguishing functions in $\C$ from
random.
\begin{proposition}[Informal]
\label{prop:lb}
Assuming one-way functions exist, there is no
efficient algorithm for computing $(\C,\tau)$-optimal
$(\C,d,\tau)$-metric multifair predictions
for general $\C,d,$ and constant $\tau$.
\end{proposition}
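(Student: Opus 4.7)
The plan is to reduce PRF security to efficient computation of $(\C,\tau)$-optimal $(\C,d,\tau)$-metric multifair predictions; since pseudorandom functions exist if and only if one-way functions do, this establishes the proposition. Fix a PRF family $\set{F_k : \set{0,1}^n \to \set{-1,+1}}_{k \in K}$. Set $\X_0 = \set{0,1}^n$, adjoin anchors $\X_1 = \set{a_+, a_-}$, and let $\X = \X_0 \cup \X_1$. Following the bipartite template of the excerpt, take $\C = \set{S_k^+, S_k^- : k \in K}$ with $S_k^\sigma = \set{(x, a_\sigma) : F_k(x) = \sigma}$; each comparison has density $\Omega(1)$ under uniform $\M$ by (pseudo-)balancedness of $F_k$, so $\C$ is $\gamma$-large for a constant $\gamma$.

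Given oracle access to $F : \X_0 \to \set{-1,+1}$ (either $F = F_{k^*}$ for a uniform $k^* \in K$, or $F$ truly random), the distinguisher $D^F$ draws $N = \poly(n)$ individuals $\tilde{x}_1, \ldots, \tilde{x}_N$ from $\X_0$, queries $F$ at each, and constructs a post-processing instance: labels $y(a_\sigma) = \sigma$ and $y(\tilde{x}_i) = F(\tilde{x}_i) \cdot \eta_i$ for independent biased noise $\eta_i \in \set{-1,+1}$ with $\Pr[\eta_i = +1] = 3/4$; and metric $d(\tilde{x}_i, a_\sigma) = [F(\tilde{x}_i) \ne \sigma]$, with $d(a_+, a_-) = 2$. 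The distinguisher feeds these, together with $m = \poly(n)$ random metric samples, to $\A$, receives predictions $\hat{f}$, and outputs ``PRF'' iff the agreement rate of $\hat{f}$ with $F$ on $\tilde{x}_1, \ldots, \tilde{x}_N$ exceeds that of $y$ by a constant margin.

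In the PRF case the constraint $S_{k^*}^\sigma$ is tight ($\E[d] = 0$), forcing any $(\C,d,\tau)$-feasible $\hat{f}$ with $\hat{f}(a_\sigma) \approx \sigma$ to satisfy $\hat{f}(\tilde{x}_i) \approx \sigma$ on essentially all $\tilde{x}_i \in S_{k^*}^\sigma$; setting $\hat{f}(a_\sigma) = \sigma$ is $(\C,\tau)$-optimal precisely because it denoises $y$ into $\hat{f} \approx F_{k^*} = F$. In the random-$F$ case every comparison is loose ($\E[d] \approx 1/2$), so no multifairness constraint aligns with $F$ and any $(\C,\tau)$-optimal $\hat{f}$ is no better than $y$ at predicting $F$ on the $\tilde{x}_i$. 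The resulting constant agreement gap yields the PRF distinguisher, contradicting PRF security.

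The hard part will be the quantitative label-and-loss calibration: one must verify that $(\C,\tau)$-optimality in the PRF case strictly prefers the denoising solution $\hat{f}(a_\sigma) = \sigma$ over both the trivial constant predictor and every other $(\C,d,\tau)$-feasible predictor, while in the random case no $(\C,\tau)$-optimal $\hat{f}$ can improve on the noise level of $y$ by any detectable margin. Establishing this separation reduces to solving the constrained convex program in both regimes and comparing the induced predictor structures; this calculation is deferred to the appendix.
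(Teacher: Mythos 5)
Your high-level reduction is the same as the paper's: embed a pseudorandom function family into the metric via bipartite comparisons keyed by the concepts, and argue that any efficient algorithm producing $(\C,\tau)$-optimal, $(\C,d,\tau)$-metric multifair predictions would yield a PRF distinguisher (so one-way functions rule it out). The difficulty is that the step you defer ``to the appendix'' is exactly the content of the proposition, and the sketch you give for it does not hold as stated. With only two anchors $a_+,a_-$ among $N+2$ individuals, the anchors contribute $O(1/N)$ to the loss, so neither optimality nor feasibility forces $\hat{f}(a_\sigma)\approx\sigma$; your PRF-case argument (``forcing any feasible $\hat{f}$ with $\hat{f}(a_\sigma)\approx\sigma$ to satisfy $\hat{f}(\tilde{x}_i)\approx\sigma$'') is conditioned on a property that nothing in your construction pins down. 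The repair has to run the other way: the tight constraints $S_{k^*}^{\pm}$ force $\hat{f}$ to be nearly constant on each class $\set{x: F(x)=\sigma}$, near whatever value $\hat{f}(a_\sigma)$ happens to take, and then $\tau$-optimality of the loss forces those two constants to lie near the class-conditional label means (for, say, squared loss, near $\pm 1/2$), whose signs recover $F$ --- an argument you list as an open calibration task rather than carry out. You also omit the soundness direction: in the truly-random case the algorithm sees $F$ only through the $1/4$-noisy labels and through the $m$ metric samples, each of which reveals $F$ exactly at one point, so you must take $N \gg m$ and give an information-theoretic argument that no algorithm's output can beat the $3/4$ agreement baseline by a constant on the $N$ sampled points.

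The paper sidesteps both difficulties with a different calibration and a different distinguishing statistic. It takes $\card{\X_1}=2\card{\X_0}$ with ideal label $1$ on $\X_1$ and $0$ on $\X_0$, so the heavy $\X_1$ side pins any near-optimal multifair prediction to average at least $1-\tau$ on $\X_1$; in the PRF case the single tight comparison $S_c$ then drags the predictions on $\set{x_0: c(x_0)=1}$ up to at least $1-2\tau$ against their label $0$, giving loss roughly $\Pr[c(x_0)=1]/3 - O(\tau)$, while in the random case every comparison has $\E_S[d(x_0,x_1)]\ge 1-o(1)$ and loss $O(\tau)$ is attainable. The distinguisher simply thresholds the loss of the returned predictions, with no injected label noise and no agreement estimation against $F$. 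Your route is plausibly repairable along the lines sketched above, but as written the decisive separation between the two cases is asserted, not proved.
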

Essentially, without assumptions
that $\C$ is a learnable class of boolean functions, some
nontrivial running time dependence on $\card{\C}$ is necessary.
The connection between learning and
pseudorandom functions \cite{valiant1984theory,ggm}
is well-established;
under stronger cryptographic assumptions as in \cite{prfs},
the reduction implies that a running time of
$\Omega(\card{\C}^{\alpha})$ is necessary for
some constant $\alpha > 0$.

\section{Related works and discussion}
Many exciting recent works have investigated fairness in
machine learning.  In particular, there is much debate on
the very definitions of what it means for a classifier to be fair
\cite{kleinberg2016inherent, chouldechova2017fair, pleiss2017fairness, hardt2016equality, corbett2017algorithmic,multi}.
Beyond the work of Dwork~\emph{et al.} \cite{fta}, our work bears most similarity
to two recent works of H\'{e}bert-Johnson~\emph{et al.}
and Kearns~\emph{et al.} \cite{multi,kearns2017preventing}.
As in this work, both of these papers investigate notions of fairness that aim to
strengthen the guarantees of statistical notions, while maintaining
their practicality.  These works also both draw connections
between achieving notions of fairness and efficient agnostic learning.
In general, agnostic learning is considered a notoriously hard computational
problem \cite{kearns,agnostic,feldman2012agnostic};
that said, in the context of fairness in machine learning,
\cite{kearns2017preventing} show that using heuristic methods to
agnostically learn linear hypotheses seems to work well in practice.

Metric multifairness does not directly generalize either \cite{multi} or \cite{kearns2017preventing}, but we argue that it provides a more flexible alternative to these approaches for subpopulation fairness.  In particular, these works aim to achieve specific notions of fairness -- either calibration or equalized error rates -- across a rich class of subpopulations.  As has been well-documented \cite{kleinberg2016inherent,chouldechova2017fair,pleiss2017fairness}, calibration and equalized error rates, in general, cannot be simultaneously satisfied.
Often, researchers frame this incompatibility as a choice: either you satisfy calibration or you satisfy equalized error rates; nevertheless, there are many applications where some interpolation between accuracy (\`a la calibration) and corrective treatment (\`a la equalized error rates) seems appropriate.

Metric-based fairness offers a way to balance these conflicting fairness desiderata. In particular, one could design a similarity metric that preserves accuracy in predictions and separately a metric that performs corrective treatment, and then enforce metric multifairness on an appropriate combination of the metrics.  For instance, returning to the loan repayment example, an ideal metric might be a combination of credit scores (which tend to be calibrated) and a metric that aims to increase the loans given to historically underrepresented populations (by, say, requiring the top percentiles of each subpopulation be treated similarly).
Different combinations of the two metrics
would place different weights on the degree of calibration and corrective discrimination in the resulting predictor.
Of course, one could equally apply this metric in the framework of \cite{fta}, but the big advantage with
metric multifairness is that we only need a small sample from the metric to provide a relaxed, but still strong guarantee of fairness.

We are optimistic that metric multifairness will provide an avenue towards implementing metric-based fairness notions.
At present, the results are theoretical, but we
hope this work can open the door to empirical studies
across diverse domains, especially since one of the
strengths of the framework is its generality.
We view testing the empirical performance of
metric multifairness with various choices of metric
$d$ and collection $\C$ as an exciting direction for
future research.

Finally, two recent theoretical works also investigate extensions to the fairness through awareness framework of \cite{fta}. Gillen~{\em et al.} \cite{GillenJKR18} study metric-based individually fair online decision-making in the presence of an unknown fairness metric.
In their setting, every day, a decision maker must
choose between candidates available on that day;
the goal is to have the decision maker's choices
appear metric fair on each day (but not across days).
Their work makes a strong learnability assumption
about the underlying metric; in particular, they
assume that the unknown metric is a Mahalanobis metric, whereas our focus is on fair classification when the metric is unknown and unrestricted.
Rothblum and Yona \cite{RothlbumY18} study fair machine learning under a different relaxation of metric fairness, which they call {\em approximate} metric fairness. They assume that the metric is fully specified and known to the learning algorithm, whereas our focus is on addressing the challenge of an unknown metric. Their notion of approximate metric fairness aims to protect {\em all} (large enough) groups, and thus, is more strict than metric multifairness.

\paragraph{Acknowledgements.}
\emph{The authors thank Cynthia Dwork, Roy Frostig, Fereshte Khani, Vatsal Sharan, Paris Siminelakis, and Gregory Valiant for helpful conversations and feedback on earlier drafts of this work.  We thank the anonymous reviewers for their careful reading and suggestions on how to improve the clarity of the presentation.}

\bibliographystyle{plain}
\bibliography{refs}

\appendix

\section{Analysis of Algorithm~\ref{alg:switching}}
\paragraph{Overview of analysis}
Here, we give a high-level overview of the analysis of
Algorithm~\ref{alg:switching}.  We defer some technical
lemmas to Appendix~\ref{app:sec:proof}.
We refer to $K_f$ as the set of ``feasible iterations'' where
we step according to the objective; that is,
\begin{equation}
K_f = \set{k \in [T] : \hat{R}_{S_k}(w_k) \le 4\tau/5}
\end{equation}

\paragraph{Fairness analysis}
We begin by showing that the hypothesis $\bar{w}$ that
Algorithm~\ref{alg:switching} returns satisfies metric multifairness.
\begin{lemma}
\label{lem:fairness}
Suppose for all $S \in \C$, the residual oracle $\hat{R}_S$
has tolerance $\tau/5$.  Then, $\bar{w}$ is $(\C,d,\tau)$-metric
multifair.
\end{lemma}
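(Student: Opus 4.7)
The plan is to exploit two facts: (i) every iterate that gets added to $W$ satisfies all \emph{empirical} constraints up to the cutoff $4\tau/5$, and (ii) the true residual $R_S$ is convex in $w$, so the average $\bar w$ inherits any per-iterate upper bound.

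First I would translate the algorithm's feasibility check into a bound on the true residual. By the definition of $W$, for every $w_k \in W$ and every $S \in \C$ we have $\hat R_S(w_k) \le 4\tau/5$. Since the residual oracle has tolerance $\tau/5$, i.e.\ $|\hat R_S(w_k) - R_S(w_k)| \le \tau/5$ uniformly over $\F$, this upgrades to
\begin{equation}
R_S(w_k) \;\le\; \hat R_S(w_k) + \tau/5 \;\le\; \tfrac{4\tau}{5} + \tfrac{\tau}{5} \;=\; \tau
\end{equation}
simultaneously for all $S \in \C$ and all $w_k \in W$.

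Next I would lift this pointwise bound to the averaged iterate $\bar w = \tfrac{1}{|W|}\sum_{w_k \in W} w_k$ using convexity. As noted in the main text, for linear $f_w$ the quantity $\E_{(x,x')\sim S}[|f_w(x) - f_w(x')|]$ is convex in $w$, and $\E_{(x,x')\sim S}[d(x,x')]$ is a constant (independent of $w$); hence $R_S(w)$ is convex in $w$ for each fixed $S \in \C$. Jensen's inequality then gives
\begin{equation}
R_S(\bar w) \;\le\; \frac{1}{|W|}\sum_{w_k \in W} R_S(w_k) \;\le\; \tau
\end{equation}
for every $S \in \C$, which is precisely the $(\C,d,\tau)$-metric multifairness condition. (One should also note that $W$ is nonempty; this is easy since the trivial hypothesis $w = 0$ satisfies all constraints, and indeed the convergence analysis shows at least one feasible iteration occurs, but Lemma~\ref{lem:fairness} itself only requires the statement for nonempty $W$ and the output is otherwise undefined.)

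The argument is short and the only potentially subtle step is justifying convexity of $R_S$ for the averaged iterate, but this follows immediately from the convexity of $w \mapsto |\langle w, x\rangle - \langle w, x'\rangle|$ (composition of an absolute value with a linear map), together with the fact that projection onto $[-1,1]$ used inside $f_w$ preserves convexity of the absolute difference. There is no real obstacle; the lemma is essentially a bookkeeping consequence of the algorithm's switching rule plus convexity, with the tolerance budget $\tau/5$ chosen precisely to convert the empirical cutoff $4\tau/5$ into the target slack $\tau$.
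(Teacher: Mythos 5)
Your proof is correct and follows essentially the same route as the paper: the switching rule plus the $\tau/5$ oracle tolerance gives $R_S(w_k) \le \tau$ for every feasible iterate, and convexity of $R_S$ (Jensen) transfers this bound to the averaged output $\bar{w}$. One small caveat: your parenthetical claim that clipping $\langle w,x\rangle$ to $[-1,1]$ preserves convexity of the absolute difference is not true in general (in one dimension it can produce $2\min(\card{w},1)$, which is not convex), but this does not affect the argument since the paper itself simply takes convexity of $R_S$ in $w$ for linear families as the standing assumption.
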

\begin{proof}
We choose our final hypothesis $\bar{w}$
to be the weighted average of the feasible iterates.
Note that the update rules for $K_f$ and $W$ 
imply that $\bar{w}$ is
a convex combination of hypotheses where no constraint appears
significantly violated,
$\bar{w} = \frac{1}{\card{K_f}}\cdot \sum_{k \in K_f} w_k$.
By convexity of $R_{S}$ we have the following inequality
for all $S \in \C$.
\begin{equation}
R_S(\bar{w}) = R_S\left(\frac{1}{\card{K_f}}
\sum_{k \in K_f} w_k\right)
\le \frac{1}{\card{K_f}}\sum_{k \in K_f} R_S(w_k)
\end{equation}
Further, for all $S \in \C$ and all $k \in [T]$,
by the assumed tolerance of $R_S$, we know that
$$\card{R_S(w_k) - \hat{R}_S(w_k)} \le \tau/5.$$
Given that for all $k \in K_f$, $\hat{R}_{S_k}(w_k) \le 4\tau/5$,
then applying the triangle inequality, we conclude that
for each comparison $S \in \C$,
\begin{align*}
\E_{(x,x') \sim S}\big[\card{f_{\bar{w}}(x) - f_{\bar{w}}(x')} -
d(x,x')\big] = R_S(\bar{w}) \le \tau.
\end{align*}
Hence, $\bar{w}$ is $(\C,d,\tau)$-metric multifair.
\end{proof}

\paragraph{Utility and runtime analysis}
We analyze the utility of Algorithm~\ref{alg:switching}
using a duality argument.  For notational convenience, denote
$L(w) = \E_{x_i \sim \D}[L(f_w(x_i),y_i)]$.
In addition to the assumptions in the main body,
throughout, we assume the following bounds on the
subgradients for all $w \in \F$.
\begin{align}
\forall S \in \C:&~\norm{\grad R_S(w)}_\infty \le m &\norm{\grad L(w)}_\infty \le g
\end{align}
Assuming an $\ell_\infty$ bound implies a bound on
the corresponding second moments of the stochastic subgradients;
specifically, we use the notation
$\norm{\grad R_S(w)}_2^2 \le M^2 = m^2n$ and
$\norm{\grad L(w)}_2^2 \le G^2 = g^2n$.

Consider the Lagrangian of the program
${\cal L}:\F \times \R_+^{\card{\C}} \to \R$.
\begin{equation}
{\cal L}(w,\lambda) = L(w) + \sum_{S \in \C} \lambda_S R_S(w)
\end{equation}
Let $w_* \in \F$ be an optimal feasible hypothesis;
that is, $w_*$ is a $(\C,d,0)$-metric multifair hypothesis
such that $L(w_*) \le L(w)$ for all other $(\C,d,0)$-metric multifair
hypotheses $w \in \F$.\footnote{Such a $w^*$ exists, as $w = 0 \in \R^n$
always trivially satisfies all the fairness constraints.}
By its optimality and feasibility, we know that $w_*$ achieves objective value
$L(w_*) = \inf_{w \in F}\sup_{\lambda \in \R_+^{\card{\C}}} {\cal L}(w,\lambda).$
Recall, the dual objective is given as
$D(\lambda) = \inf_{w \in \F}{\cal L}(w,\lambda)$.
Weak duality tells us that the dual objective value is
upper bounded by the primal objective value.
\begin{equation}
\sup_{\lambda \in \R_+^{\card{\C}}} D(\lambda)
\le L(w_*)
\end{equation}
As there is a feasible point and the convex constraints induce
a polytope, Slater's condition is satisfied and strong duality holds.
To analyze the utility of $\bar{w}$, we choose a setting of
dual multipliers $\bar\lambda \in \R_+^{\card{C}}$ such that
the duality gap $\gamma(w,\lambda) = L(w) - D(\lambda)$
is bounded (with high probability over the random choice of
stochastic subgradients).
Exhibiting such a setting of $\bar{\lambda}$
demonstrates the near optimality of $\bar{w}$.
\begin{lemma}
\label{lem:utility}
Let $\tau,\delta > 0$ and $\F = [-B,B]^n$.
After running Algorithm~\ref{alg:switching}
for $T > \frac{30^2M^2B^2n\log(n/\delta)}{\tau^2}$ iterations, then
with probability at least $1 - 8\delta$ (over the stochastic subgradients)
$$L(\bar{w}) \le L(w_*) + \frac{3G}{5M}\tau.$$
\end{lemma}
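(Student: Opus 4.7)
The plan is to run the classical Lagrangian-duality argument for Nesterov's switching subgradient method, adapted to the stochastic setting. Weak duality gives $D(\bar\lambda) \le L(w_*)$ for any $\bar\lambda \ge 0$, so it suffices to exhibit a single $\bar\lambda$ for which $L(\bar w) - D(\bar\lambda) \le \tfrac{3G}{5M}\tau$ with probability at least $1-8\delta$. Following Nesterov, I would set $\bar\lambda_S$ proportional to the number of infeasible iterations at which $S_k = S$, weighted by the ratio $\eta_c/\eta_f$ and normalized by $|K_f|$, so that the whole trajectory can be viewed as a single stochastic subgradient descent on ${\cal L}(\cdot,\bar\lambda)$ along the iterates $w_0,\ldots,w_{T-1}$.

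The workhorse is the one-step SGD inequality
\begin{equation*}
\|w_{k+1}-w_*\|_2^2 \le \|w_k-w_*\|_2^2 - 2\eta_k \langle \tilde g_k, w_k - w_*\rangle + \eta_k^2 \|\tilde g_k\|_2^2,
\end{equation*}
which holds because projection onto $\F$ is nonexpansive; here $\tilde g_k$ is a stochastic subgradient of the currently active function, with $\eta_k = \eta_f = \tau/(GM)$ on feasible steps and $\eta_k = \eta_c = \tau/M^2$ on infeasible steps. Telescoping over $k=0,\ldots,T-1$, bounding $\|w_0-w_*\|_2^2 \le 4B^2 n$, and splitting by feasibility yields a master inequality relating $\sum_{k \in K_f} \langle \grad L(w_k), w_k - w_*\rangle$ and $\sum_{k \notin K_f} \langle \grad R_{S_k}(w_k), w_k - w_*\rangle$ to the initial budget $4B^2 n$ plus the quadratic error $|K_f|\eta_f^2 G^2 + (T-|K_f|)\eta_c^2 M^2$.

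Convexity collapses both sums. On feasible iterations, $\langle \grad L(w_k), w_k - w_*\rangle \ge L(w_k) - L(w_*)$. On infeasible iterations, $\langle \grad R_{S_k}(w_k), w_k - w_*\rangle \ge R_{S_k}(w_k) - R_{S_k}(w_*) \ge R_{S_k}(w_k)$, since $w_*$ is $(\C,d,0)$-multifair. Because the infeasibility test requires $\hat R_{S_k}(w_k) > 4\tau/5$, the residual oracle's $\tau/5$ tolerance yields $R_{S_k}(w_k) > 3\tau/5$, so with $\eta_c = \tau/M^2$ each infeasible step contributes at least $2\eta_c(3\tau/5) - \eta_c^2 M^2 = \tau^2/(5M^2)$ of potential decrease. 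This caps the number of infeasible iterations at $O(B^2 n M^2/\tau^2)$, which for the stated $T$ forces $|K_f| = \Omega(T)$. The remaining feasible terms then give the standard $O\!\left(B^2 n/(\eta_f |K_f|) + \eta_f G^2\right)$ bound on $\tfrac{1}{|K_f|}\sum_{k \in K_f}(L(w_k)-L(w_*))$, which at $\eta_f = \tau/(GM)$ and the stated $T$ evaluates to $\tfrac{3G\tau}{5M}$; convexity of $L$ transfers this bound to $L(\bar w) - L(w_*)$.

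The main obstacle is the stochastic slack. The argument above treats $\tilde g_k$ as a true subgradient, but it is only unbiased: $\E[\tilde g_k \mid w_k] \in \partial \phi_k(w_k)$. The discrepancy is a martingale $\sum_k \langle \tilde g_k - \E[\tilde g_k \mid w_k], w_k - w_*\rangle$ with per-step increment at most $O(MB\sqrt{n})$ by Cauchy--Schwarz and the diameter of $\F$. An Azuma--Hoeffding bound --- with a coordinate-wise union bound supplying the $\log(n/\delta)$ factor --- controls this martingale by $O(MB\sqrt{nT\log(n/\delta)})$, and a handful of such concentration inequalities (one for the objective gradient, one for the constraint gradients, and one for the quadratic variance terms) combine by union bound to the stated failure probability $8\delta$. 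The quantitative care is to verify that the slack between the infeasibility threshold ($4\tau/5$ vs.\ the guaranteed violation $3\tau/5$) and between the target rate $3G\tau/(5M)$ and the noiseless rate $G\tau/(2M)$ each absorbs the concentration error once $T \ge 30^2 M^2 B^2 n\log(n/\delta)/\tau^2$.
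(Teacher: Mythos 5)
Your overall strategy is the paper's: the one-step projected-subgradient inequality (Lemma~\ref{lem:potential}), the feasible/infeasible split with step sizes $\eta_L=\tau/(GM)$ and $\eta_R=\tau/M^2$, the violation $R_{S_k}(w_k)\ge 3\tau/5$ extracted from the $4\tau/5$ threshold and $\tau/5$ tolerance, Azuma plus a union bound for the martingale noise, and the $\frac{G\tau}{2M}+\frac{G\tau}{10M}$ accounting. The one step that does not hold as stated is the claim that the per-step potential decrease on infeasible iterations ``caps the number of infeasible iterations at $O(B^2nM^2/\tau^2)$, which \ldots forces $\card{K_f}=\Omega(T)$.'' The budget $\norm{w_0-w_*}_2^2\le 4B^2n$ is not fixed: on feasible iterations the algorithm steps along the objective, not toward $w_*$, and since $w_k$ is only \emph{approximately} feasible it may well have $L(w_k)<L(w_*)$, so each feasible step can \emph{increase} $\norm{w_k-w_*}_2^2$ by up to roughly $2\eta_L\bigl(L(w_*)-L(w_k)\bigr)+\eta_L^2 G^2=\Theta(\tau/(GM))$, in addition to the stochastic noise. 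Over $T\approx M^2B^2n\log(n/\delta)/\tau^2$ iterations this replenishment is $\Theta\bigl(MB^2n\log(n/\delta)/(G\tau)\bigr)$, which dwarfs $B^2n$ once $\tau$ is small; hence no bound on $T-\card{K_f}$ that is independent of $T$ follows from the budget argument, and the weaker bound one can extract does not make the term $2B^2n/(\eta_L\card{K_f})$ smaller than $G\tau/(10M)$ for all $\tau>0$ at the stated $T$.

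The repair is exactly the paper's bookkeeping, which requires no lower bound on $\card{K_f}$ beyond $K_f\neq\emptyset$: keep the negative infeasible-step progress inside the expression that is divided by $\eta_L\card{K_f}$, and split $-(T-\card{K_f})\cdot\frac{\tau^2}{10M^2}$ (in the normalization $u_k(w)=\frac12\norm{w-w_k}^2$ of Lemma~\ref{lem:progress:infeasible}) as $-\frac{\tau^2}{10M^2}T+\frac{\tau^2}{10M^2}\card{K_f}$. The term proportional to $T$ absorbs the initial distance $2B^2n$ and the Azuma error $\frac{\tau B}{M}\sqrt{8Tn\log(n/\delta)}$ of Lemma~\ref{lem:noisy} for the stated $T$, no matter how small $\card{K_f}$ is, while the $\card{K_f}$ part, after division by $\eta_L\card{K_f}$, is precisely the extra $\frac{G\tau}{10M}$ that raises the noiseless $\frac{G\tau}{2M}$ to the claimed $\frac{3G\tau}{5M}$. (Nonemptiness of $K_f$ follows from the same inequality: if every iteration were infeasible, the potential would have to drop by more than its initial value.) With that replacement, the rest of your outline --- the choice of dual multipliers, the per-step inequalities on feasible and infeasible iterations, and the concentration/union-bound accounting for failure probability $8\delta$ --- coincides with the paper's proof.
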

We give the full proof of Lemma~\ref{lem:utility} in Appendix~\ref{app:sec:proof}.

\subsection{Answering residual queries}
Next, we describe how to answer residual queries $R_S(w)$
efficiently, in terms of time and samples.
\begin{lemma}
\label{lem:residual}
For $\tau, \gamma >0$, for a $\gamma$-large collection of comparisons
$\C \subseteq 2^{\X\times \X}$, with probability $1-\delta$,
given access to $n$ metric samples,
every residual query $R_S(w)$ can be answered correctly with
tolerance $\tau$ provided
$$n \ge \tilde{\Omega}\left(\frac{\log(\card{C}/\delta)}{\gamma \cdot \tau^2}\right).$$
Each residual query $R_S(w)$ can be answered
after $\tilde{O}\left(\frac{\log(T\cdot \card{\C}/\delta)}{\gamma \cdot \tau^2}\right)$
evaluations of the current hypothesis.
\end{lemma}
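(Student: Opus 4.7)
The plan is to draw a single batch of $n$ metric samples $(x_i, x_i', \Delta_i)_{i=1}^{n}$ from $\M$ up front and, for each query specified by a pair $(S,w)$, form the plug-in empirical estimator
\[
\hat R_S(w) \;=\; \frac{1}{|I_S|} \sum_{i \in I_S} \bigl(|f_w(x_i) - f_w(x_i')| - \Delta_i\bigr), \qquad I_S := \{i : (x_i,x_i') \in S\}.
\]
The analysis combines three standard ingredients: a Chernoff bound guaranteeing each $|I_S|$ is sufficiently large, a Hoeffding bound controlling the deviation of $\hat R_S(w)$ from $R_S(w)$ for each fixed $(S,w)$, and a union bound promoting these to a uniform statement over $\C$ and the iterates.

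First I would invoke $\gamma$-largeness, which ensures $\Pr_{(x,x') \sim \M}[(x,x') \in S] \ge \gamma$ for every $S \in \C$, together with a multiplicative Chernoff bound to conclude $|I_S| \ge \gamma n/2$ with probability at least $1 - \exp(-\Omega(\gamma n))$; a union bound over $\C$ makes this hold simultaneously for all $S \in \C$ once $n = \tilde{\Omega}(\log(|\C|/\delta)/\gamma)$. Next, conditional on the identities of the sampled pairs, each summand $|f_w(x_i) - f_w(x_i')| - \Delta_i$ lies in $[-2,2]$ with conditional mean $|f_w(x_i) - f_w(x_i')| - d(x_i, x_i')$ (using $\E[\Delta_i \mid x_i, x_i'] = d(x_i,x_i')$ and the independence of samples), so Hoeffding gives
\[
\Pr\bigl[\,|\hat R_S(w) - R_S(w)| > \tau\,\bigr] \;\le\; 2\exp\bigl(-\Omega(\gamma n\,\tau^2)\bigr)
\]
for any fixed $w \in \F$. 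A union bound over $S \in \C$ then pins the deviation to $\tau$ simultaneously across all constraints, provided $n = \tilde{\Omega}(\log(|\C|/\delta)/(\gamma\tau^2))$, which is the target bound.

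The main obstacle is that the iterates $w_k$ queried by Algorithm~\ref{alg:switching} are chosen \emph{adaptively} from the same sample, so a naive union bound over $T$ fixed hypotheses is not rigorous. I would handle this by covering $\F = [-B,B]^n$ with an $\varepsilon$-net at resolution $\varepsilon = \tau/\mathrm{poly}(n,B)$; since $\norm{x}_1 \le 1$ gives $|f_w(x) - f_{w'}(x)| \le \norm{w - w'}_\infty$, the estimator $\hat R_S(w)$ is Lipschitz in $w$ and uniform control on the net transfers to all of $\F$. The extra $n \log(B/\tau)$ factor this introduces is polylogarithmic in the relevant parameters and is absorbed by the $\tilde{\Omega}$.

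Finally, the per-query evaluation bound follows from the same Hoeffding argument applied query-by-query: estimating a single $R_S(w)$ to tolerance $\tau$ with failure probability $\delta/(T|\C|)$ requires only $\tilde{O}(\log(T|\C|/\delta)/(\gamma\tau^2))$ evaluations of $f_w$ on a subsample drawn from $I_S$, and union-bounding across the $T|\C|$ queries the algorithm performs absorbs the error budget. The $\log T$ here is kept explicit because the per-query complexity is stated without hiding $T$ inside $\tilde{\Omega}$.
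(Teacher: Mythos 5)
Your construction estimates both halves of $R_S(w)$ from the same fixed batch of metric samples, and you then patch the resulting adaptivity problem (the iterates $w_k$ depend on that batch) with an $\varepsilon$-net over $\F=[-B,B]^n$. That patch does not deliver the stated bound: the log-cardinality of such a net is $\Theta\big(n\log(B/\varepsilon)\big)$, so your union bound puts an extra factor linear in the dimension $n$ into the \emph{metric} sample complexity. This is not ``polylogarithmic in the relevant parameters'' and is not absorbed by the $\tilde\Omega$: the lemma (and Theorem~\ref{thm:SGD}) claim a metric sample complexity of $\tilde{O}\left(\frac{\log(\card{\C}/\delta)}{\gamma\tau^2}\right)$ that is independent of the dimension, and in the post-processing setting of Section~\ref{sec:post} the dimension is the number of individuals $N$, so a factor of $n$ there defeats the entire point of needing only a small sample from the metric. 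Your last step has a related flaw: resampling ``from $I_S$'' only estimates the batch average, not $\E_{(x,x')\sim S}[\cdot]$, so the per-query evaluation bound as you argue it still leans on the same net/adaptivity argument.

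The missing idea is the decomposition the paper uses: split $R_S(w)$ into $\E_{(x,x')\sim S}\big[\card{f_w(x)-f_w(x')}\big]$ and $\E_{(x,x')\sim S}\big[d(x,x')\big]$, and observe that only the second term requires metric values, while only the first depends on $w$. The metric term is $w$-independent, so it is estimated \emph{once} from the up-front batch via Hoeffding plus a union bound over $\C$ only (Proposition~\ref{prop:metric:sample}); there is no adaptivity issue and no net is needed, giving the dimension-free bound $\tilde{\Omega}\left(\frac{\log(\card{\C}/\delta)}{\gamma\tau^2}\right)$. The deviation term needs no metric labels at all: at each iteration one draws \emph{fresh} unlabeled pairs from $\M$ and evaluates the current hypothesis on them; conditional on $w_k$ these samples are independent of $w_k$, so a Hoeffding bound with failure probability $\delta/(T\card{\C})$ and a union bound over the $T$ iterations and all $S\in\C$ yields the $\tilde{O}\left(\frac{\log(T\cdot\card{\C}/\delta)}{\gamma\tau^2}\right)$ hypothesis evaluations per query. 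Your Chernoff/Hoeffding/union-bound machinery is the right toolkit, but without separating the two resources in this way you either pay a factor of $T$ in metric samples or a factor of $n$ from the net, and in both cases you prove a weaker statement than the lemma.
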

\begin{proof}
Recall the definition of $R_{S}(w)$.
$$
R_{S}(w) = \E_{(x,x') \sim S}\big[\card{f_w(x) - f_w(x')}\big]
- \E_{(x,x') \sim S}\big[d(x,x')\big]
$$
Proposition~\ref{prop:metric:sample} shows that
$\E_S[d(x,x')]$ can be estimated for all $S \in \C$
from a small number of metric samples.  The proof follows a standard
Chernoff plus union bound argument.  For completeness, we give
a full proof next.
Thus, Lemma~\ref{lem:residual} follows by showing that at each iteration
$\E_{S}[\card{f_w(x) - f_w(x')}]$ can be estimated from a small
number of evaluations of the current hypothesis $f_w$.

We can estimate the expected value of the deviation on $f$ over $S \in \C$
with a small set of unlabeled samples from $\X \times \X$; we will evaluate
the hypothesis $f$ for each of these samples.
Using an identical argument as in the case of the expected metric value,
we can prove the following bound on how many comparisons we need
to make, which shows the lemma.
\begin{proposition}
\label{prop:metric:sample}
Suppose $\C$ is $\gamma$-large.  Then with probability at least
$1-\delta$, for all $S \in \C$, the empirical estimate for
$\E_S[\card{f(x) - f(x')}]$ of $n$ samples
$(x,x') \sim \M$ deviates from the
true expected value by at most $\tau$ provided
$$n \ge \tilde{\Omega}\left(\frac{B^2\log(\card{C}/\delta)}{\gamma \cdot \tau^2}\right).$$
\end{proposition}
\end{proof}

Here, we show that
a small number of samples from the metric suffices to estimate the
expected metric distance over all $S \in \C$.
Suppose $\C$ is $\gamma$-large.  Then with probability at least $1-\delta$,
for all $S \in \C$, the empirical estimates for $\E_{S}[d(x_i,x_j)]$
of $n$ metric samples deviate from their true expected value by at
most $\tau$ provided
$$n \ge \tilde{\Omega}\left(\frac{\log(\card{\C}/\delta)}{\gamma \cdot \tau^2}\right).$$

\begin{proof}
Let $(x_i,x_j,\Delta_{ij})$ represent a random metric sample.
Suppose for each $S \in \C$, we obtain $m$ such samples where
$(x_i,x_j) \sim S$,
and let $d(S) = \sum_{ij \in X_S} \Delta_{ij}$
be the empirical average over the sample.
Then, by Hoeffding's inequality, we know
$$\Pr\left[\card{d(S) - \E_{(x_i,x_j) \sim S}[d(x_i,x_j)]} > \tau \right]
\le 2e^{-2s\tau^2}.$$
If $m \ge \Omega\left(\frac{\log(\card{C}/\delta)}{\tau^2}\right)$, then
the probability that the estimate $d(S)$ is not within $\tau$ of its
true value is less than $\frac{\delta}{\card{C}}$.  Union bounding over
$\C$, the probability that every estimate has tolerance $\tau$ will be at least
$1-\delta$.

Because $\C$ is $\gamma$-large, for every $S \in \C$,
the probability a random metric sample $(x_i,x_j) \sim \M$
is in $S$ is at least $\gamma$.
If we take $\frac{\log(m)}{\gamma}$ samples,
then with probability at least $1-1/m$, one of the samples will be
in $S$.  Thus, to guarantee $d(S)$ has tolerance $\tau$
for all $S \in \C$ with probability $1-\delta$,
$$s = \frac{m\log(m)}{\gamma} =
\tilde{\Omega}\left(\frac{\log(\card{\C}/\delta)}{\gamma\cdot \tau^2}\right)$$
samples suffice.
\end{proof}

\subsection{Answering subgradient queries}
Next, we argue that the subgradient oracles can be implemented
efficiently without accessing any metric samples.
First, suppose we want to take a step according to $R_S(w)$;
while $R_S(w)$ is not differentiable, we can compute a legal
subgradient defined by partial subderivatives given as follows.
\begin{equation}
\frac{\partial R_S(w)}{\partial w_l} =
\E_{(x,x') \sim S}[\sgn(\langle w,x-x' \rangle) \cdot (x_{l} - x'_{l})]
\end{equation}
The subgradient does not depend on $d$, so no samples from the
metric are necessary.
Further, Algorithm~\ref{alg:switching}
only assumes access to stochastic subgradient oracle with bounded
entries.  If we sample a single $(x_i,x_j) \sim \M$,
then $\sgn(\langle w,x_i-x_j \rangle) \cdot (x_{il} - x_{jl})$ will
be an unbiased estimate of a subgradient of $R_S(w)$; we claim, the
entries will also be bounded.
In particular, assuming $\norm{x_i}_1 \le 1$ implies each partial
is bounded by $2$, so that we can take $M^2 = 4n$.

\subsection{Utility analysis of Algorithm~\ref{alg:switching}}
\label{app:sec:proof}

In this appendix, we give a full proof of Lemma~\ref{lem:utility}.
We defer the proof of certain technical lemmas to Appendix~\ref{app:sec:algorithm}
for the sake of presentation.

\paragraph{Proof of Lemma~\ref{lem:utility}}
Let $\tau,\delta > 0$ and $\F = \set{w \in \R^n : \norm{w}_\infty \le B}$.
After running Algorithm~\ref{alg:switching}
for $T > \frac{30^2M^2B^2n\log(n/\delta)}{\tau^2}$ iterations, then
$$L(\bar{w}) \le L(w_*) + \frac{3G}{5M}\tau$$
with probability at least $1 - 8\delta$ over the randomness of the algorithm.
\begin{proof}
As before, we refer to $K_f \subseteq [T]$ as the set of feasible iterations,
where we step according to the objective,
and $[T] \setminus K_f$ as the set of infeasible iterations,
where we step according to the violated constraints.
Recall, we denote the set of subgradients of a
function $L$ (or $R$) at $w$ by $\partial L(w)$
and denote by $\grad L(w)$ a stochastic subgradient,
where $\E[\grad L(w) \given w] \in \partial L(w)$.

When we do not step according
to the objective, we step according to the subgradient of some
violated comparison constraint.  In fact,
we show that stepping according to any convex combination
of such subgradients suffices to guarantee progress in the
duality gap.
In the case wher $t \not \in K_f$,
we assume that we can find some convex combination
$\sum_{S \in \C} \alpha_{k,S} \hat{R}_S(w_k) > 4\tau/5$
where for all $S \in \C$, $\alpha_{k,S} \in \Delta_{\card{\C}-1}$.
We show that if we step
according to the corresponding combination of the
subgradients of $R_S(w_k)$, we can bound the duality gap. Specifically,
for $k \not \in K_f$, let the algorithm's step be given by
$$\sum_{S \in \C}\alpha_{k,S}\grad R_S(w_k)$$ where for each $S \in \C$, we have
$\E\left[\grad R_S(w_k) \given w_k\right] \in \partial R_S(w_k)$.
Let $\eta_L = \frac{\tau}{GM}$ and $\eta_R = \frac{\tau}{M^2}$
denote the step size for the objective and residual steps,
respectively. Then, consider the following choice of dual multipliers
for each $S \in \C$.
\begin{equation}
\bar{\lambda}_S = \frac{\eta_R}{\eta_L \card{K_f}}\sum_{k \not \in K_f}\alpha_{k,S}
\end{equation}
Expanding the definition of $\bar{w}$ and applying convexity,
we can bound the duality gap as follows
\begin{align}
\gamma(\bar{w},\bar{\lambda}) & = L(\bar{w}) - D(\bar{\lambda})\\
&\le \frac{1}{\card{K_f}}\left(\sum_{k \in K_f}
L(w_k)\right) - \inf_{w \in \F}\set{L(w) + \sum_{S \in \C}
\bar{\lambda}_S R_S(w)} \label{ineq:gap:defns}
\\
&= \sup_{w \in \F}\set{
\frac{1}{\card{K_f}}\left(\sum_{k \in K_f}
L(w_k)\right) - L(w) - \sum_{S \in \C} \bar{\lambda}_S R_S(w)
}\\
&= \sup_{w \in \F}\set{
\frac{1}{\eta_L \card{K_f}}\left(\eta_L\sum_{k \in K_f}
(L(w_k) - L(w)) - \eta_R\sum_{k \not \in K_f}\sum_{S \in \C}
\alpha_{k,S}R_S(w) \right)} \label{ineq:gap:convex}
\end{align}
where (\ref{ineq:gap:defns}) follows from expanding $\bar{w}$ then applying convexity of $L$
and the definition of $d(\bar{\lambda})$ and (\ref{ineq:gap:convex})
follows by our choice of $\bar\lambda_S$ for each $S \in \C$.

With the duality gap expanded into one sum over the feasible iterates
and one sum over the infeasible iterates, we can analyze these iterates
separately.  The following lemmas show how to track
the contribution of each term to the duality gap in
terms of a potential function $u_k$ defined as
$$u_{k}(w) = \frac{1}{2}\norm{w-w_k}^2.$$
For notational convenience, for each $k \in K_f$,
let $e(w_k) = \E[\grad L(w_k) \given w_k] - \grad L(w_k)$
be the noise in the subgradient computation.
\begin{lemma}
\label{lem:progress:feasible}
For all $w \in \F$ and for all $k \in K_f$,
$$\eta_L \cdot \left(L(w_k) - L(w)\right) \le 
u_k(w) - u_{k+1}(w) + \frac{\tau^2}{2M^2} 
+ \eta_L \langle e(w_k), w_k - w \rangle.$$
\end{lemma}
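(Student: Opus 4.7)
\textbf{Proof plan for Lemma~\ref{lem:progress:feasible}.} This is the standard ``one-step descent'' bound for projected stochastic subgradient descent, applied to the objective subgradient steps (the $k \in K_f$ iterations). The plan is to relate the potential drop $u_k(w) - u_{k+1}(w)$ to the linear progress term $\eta_L \langle \grad L(w_k), w_k - w\rangle$, then convert the latter into a bound on $L(w_k) - L(w)$ using convexity, introducing the noise term $e(w_k)$ in the process.

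First, since $k \in K_f$ the update rule gives $w_{k+1} = \Pi_\F\bigl(w_k - \eta_L \grad L(w_k)\bigr)$, where $\Pi_\F$ is the Euclidean projection onto the convex box $\F = [-B,B]^n$. Because $w \in \F$ and projection onto a convex set is nonexpansive, I get
\begin{equation*}
u_{k+1}(w) = \tfrac{1}{2}\norm{w_{k+1} - w}^2 \le \tfrac{1}{2}\norm{w_k - \eta_L \grad L(w_k) - w}^2.
\end{equation*}
Expanding the right-hand side and rearranging yields
\begin{equation*}
\eta_L \langle \grad L(w_k), w_k - w\rangle \le u_k(w) - u_{k+1}(w) + \tfrac{\eta_L^2}{2}\norm{\grad L(w_k)}_2^2.
\end{equation*}
The assumed bound $\norm{\grad L(w_k)}_2^2 \le G^2$ together with the choice $\eta_L = \tau/(GM)$ makes the quadratic term exactly $\tau^2/(2M^2)$.

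Next, I bring convexity of $L$ into play. Let $g_k \in \partial L(w_k)$ be the deterministic subgradient with $\E[\grad L(w_k) \given w_k] = g_k$, so that $e(w_k) = g_k - \grad L(w_k)$ by the definition given just before the lemma (matching signs with the statement). Convexity gives $L(w_k) - L(w) \le \langle g_k, w_k - w\rangle$, and writing $g_k = \grad L(w_k) + e(w_k)$ yields
\begin{equation*}
\eta_L(L(w_k) - L(w)) \le \eta_L \langle \grad L(w_k), w_k - w\rangle + \eta_L \langle e(w_k), w_k - w\rangle.
\end{equation*}
Combining with the bound from the previous step gives the claim.

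There is no genuine obstacle here; the only delicate points are (i) invoking nonexpansivity of the projection correctly (which requires $w \in \F$, satisfied since the lemma quantifies over $w \in \F$), and (ii) matching the sign convention for $e(w_k)$ so that the noise appears with a plus sign and with the argument $w_k - w$ rather than $w - w_k$. Once those are handled, the algebra of ``expand the square, use the gradient bound, apply convexity'' is routine and the choice $\eta_L = \tau/(GM)$ makes the error term collapse to $\tau^2/(2M^2)$ as written.
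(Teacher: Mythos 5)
Your proof is correct and follows essentially the same route as the paper: convexity of $L$ at $w_k$, the decomposition of the true subgradient as $\grad L(w_k) + e(w_k)$, the one-step potential bound for the projected step, and the choice $\eta_L = \tau/(GM)$ collapsing the quadratic term to $\tau^2/(2M^2)$. The only cosmetic difference is that you derive the one-step inequality directly from nonexpansivity of $\pi_{\F}$, whereas the paper invokes its Lemma~\ref{lem:potential}, which establishes the same inequality via the projection's variational inequality; the two derivations are equivalent.
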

Again, for notational convenience, for each $k \in [T]\setminus K_f$,
let $e(w_k) = \sum_{S \in \C} \alpha_{k,S}
\left(\E[\grad R_{S}(w_k) \given w_k] - \grad R_{S}(w_k)\right)$
be the noise in the subgradient computation.
\begin{lemma}
\label{lem:progress:infeasible}
For all $w \in \F$ and for all $k \in [T] \setminus K_f$,
$$-\eta_R \sum_{S \in \C} \alpha_{k,S}R_{S}(w) \le u_k(w) - u_{k+1}(w) - \frac{\tau^2}{10M^2}
+ \eta_R \langle e(w_k), w_k - w \rangle.$$
\end{lemma}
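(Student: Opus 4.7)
\textbf{Proof plan for Lemma~\ref{lem:progress:infeasible}.} The plan is the standard one-step projected stochastic subgradient analysis, adapted to handle (i) the convex-combination form of the violated-constraint witness and (ii) the use of the residual oracle's $\tau/5$ tolerance to lower bound the true residual at $w_k$. First I would set $g_k = \sum_{S\in\C}\alpha_{k,S}\grad R_S(w_k)$, so that the algorithm produces $w_{k+1}$ by projecting $w_k - \eta_R g_k$ onto $\F$. Since projection onto the convex set $\F$ is non-expansive and $w\in\F$, expanding $\|w_{k+1} - w\|_2^2$ gives the one-step inequality
\[
u_{k+1}(w) \le u_k(w) - \eta_R\langle g_k, w_k - w\rangle + \tfrac{\eta_R^2}{2}\|g_k\|_2^2.
\]

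Next I would decompose $g_k = \E[g_k\mid w_k] - e(w_k)$. By construction the conditional expectation lies in $\partial\bigl(\sum_{S}\alpha_{k,S}R_S\bigr)(w_k)$, so convexity of each $R_S$ in $w$ yields
\[
\bigl\langle \E[g_k\mid w_k],\, w_k - w\bigr\rangle \ge \sum_{S\in\C}\alpha_{k,S}\bigl(R_S(w_k) - R_S(w)\bigr).
\]
Substituting into the previous display and isolating $-\eta_R\sum_S\alpha_{k,S}R_S(w)$ produces
\[
-\eta_R\sum_{S\in\C}\alpha_{k,S}R_S(w) \le u_k(w) - u_{k+1}(w) - \eta_R\sum_{S\in\C}\alpha_{k,S}R_S(w_k) + \eta_R\langle e(w_k), w_k - w\rangle + \tfrac{\eta_R^2}{2}\|g_k\|_2^2.
\]

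To collapse the two non-stochastic error terms into $-\tau^2/(10M^2)$ I would use two quantitative facts. Since $k\notin K_f$ the witness satisfies $\sum_S\alpha_{k,S}\hat R_S(w_k) > 4\tau/5$, and the $\tau/5$-tolerance of the residual oracle (applied $S$-wise and then averaged with $\alpha_k$) upgrades this to $\sum_S\alpha_{k,S}R_S(w_k) \ge 3\tau/5$. Jensen's inequality together with $\|\grad R_S(w_k)\|_2 \le M$ gives $\|g_k\|_2 \le M$. Plugging in $\eta_R = \tau/M^2$, the residual term contributes at most $-3\tau^2/(5M^2)$ and the quadratic term at most $\tau^2/(2M^2)$; their sum is exactly $-\tau^2/(10M^2)$, matching the claim.

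The only mild subtlety, and where the proof does slightly more work than the textbook version, is that Algorithm~\ref{alg:switching} as written steps against a single $S_k$, whereas the lemma is phrased for any $\alpha_k\in\Delta_{|\C|-1}$. This generality is exactly what is needed to couple the convergence argument with the agnostic-learning reduction of Section~\ref{sec:agnostic}, where the discovered violation corresponds to a fractional real-valued comparison $S_h$. Fortunately both the convexity inequality and the norm bound on $g_k$ extend linearly to convex combinations, so no new ingredients are required.
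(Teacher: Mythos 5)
Your proposal is correct and follows essentially the same route as the paper: the one-step projected subgradient bound (the paper's Lemma~\ref{lem:potential}, which your non-expansiveness expansion reproduces), the subgradient/convexity inequality applied to the convex combination, splitting the stochastic subgradient into its conditional mean plus the noise term $e(w_k)$, using the $\tau/5$ oracle tolerance to get $\sum_S \alpha_{k,S}R_S(w_k)\ge 3\tau/5$, and the arithmetic with $\eta_R=\tau/M^2$ giving $\tau^2/(2M^2)-3\tau^2/(5M^2)=-\tau^2/(10M^2)$. Your remark about extending from a single violated $S_k$ to a convex combination matches the generality the paper itself assumes in the proof of Lemma~\ref{lem:utility}, so no gap there either.
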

We defer the proofs of Lemmas~\ref{lem:progress:feasible}
and \ref{lem:progress:infeasible} to Appendix~\ref{app:sec:algorithm}.
Assuming Lemmas~\ref{lem:progress:feasible} and \ref{lem:progress:infeasible},
we bound the duality gap as follows.
\begin{multline}
\sup_{w \in \F}\left\{\frac{1}{\eta_L\card{K_f}}\left(
\sum_{k=1}^T\big[u_{k-1}(w) - u_k(w) \big]
+ \eta_L\sum_{k \in K_f} \langle e(w_k),w_k - w \rangle
\right.\right.\\ \left.\left.\vphantom{\frac{1}{\Lambda} \sum_{k=1}^T\big[u_{k-1}(w) - u_k(w) \big]}
+ \eta_R\sum_{k \not \in K_f} \langle e(w_k),w_k - w \rangle
+ \frac{\tau^2}{2M^2}\card{K_f}
- \frac{\tau^2}{10 M^2}(T - \card{K_f})\right)\right\}
\end{multline}
\begin{multline}
\le \frac{1}{\eta_L\card{K_f}}
\underbrace{\left(\sup_{w \in \F} \set{u_0(w)
+ \eta_L\sum_{k \in K_f} \langle e(w_k), w_k - w \rangle
+ \eta_R\sum_{k \not \in K_f} \langle e(w_k), w_k - w \rangle }
- \frac{\tau^2}{10M^2}T\right)}_{(*)}
\\
+ \underbrace{\frac{G\tau}{2M} + \frac{G\tau}{10 M}}_{(**)}\label{ineq:gap:continued}
\end{multline}
by rearranging.
Noting that $(**)$ can be bounded by $\frac{3G}{5M}\tau$,
it remains to bound $(*)$.
We show that for a sufficiently large $T$, then $(*)$
cannot be positive.

Consider the terms in the supremum over $w \in \F$.
Note that we can upper bound $\sup\set{u_0(w)} \le 2B^2n$.
Additionally, we upper bound the error incurred due to
the objective subgradient noise with the following lemma,
which we prove in Appendix~\ref{app:sec:algorithm}.
\begin{lemma}
\label{lem:noisy}
With probability at least $1-4\delta$, the contribution of the noisy
subgradient computation to the duality gap can be bounded as follows.
\begin{equation}
\sup_{w \in \F}\set{\eta_L \sum_{k \in K_f}\langle e(w_k),w_k-w \rangle
+ \eta_R \sum_{k \not \in K_f}\langle e(w_k), w_k - w \rangle}
\le \frac{\tau B}{M}\sqrt{8Tn\log(n/\delta)}
\end{equation}
\end{lemma}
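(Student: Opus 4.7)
The proof will be a martingale concentration argument exploiting the box structure of $\F = [-B,B]^n$ and the choice of step sizes. To streamline notation, let me write $\eta_k = \eta_L$ when $k \in K_f$ and $\eta_k = \eta_R$ otherwise, and set $Z_k = \eta_k \cdot e(w_k)$. By definition, $\E[Z_k \mid w_k] = 0$, and each $Z_k$ is adapted to the natural filtration generated by the algorithm's randomness through iteration $k$. The first step is to separate the data-dependent and data-independent parts of the quantity in question:
\begin{align*}
\sup_{w \in \F}\sum_{k=1}^T \langle Z_k, w_k - w \rangle
= \sum_{k=1}^T \langle Z_k, w_k \rangle + \sup_{w \in \F}\bigg\langle -\sum_{k=1}^T Z_k,\, w\bigg\rangle
= \sum_{k=1}^T \langle Z_k, w_k \rangle + B\sum_{l=1}^n \bigg|\sum_{k=1}^T Z_{k,l}\bigg|,
\end{align*}
where the last equality uses that the linear function $w \mapsto \langle v, w\rangle$ is maximized over $[-B,B]^n$ at the vertex $w_l = B\cdot\sgn(v_l)$.

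Next I would verify that $\norm{Z_k}_\infty$ is uniformly bounded by something like $2\tau/(M\sqrt{n})$, regardless of whether $k \in K_f$. For $k \in K_f$, we have $\norm{e(w_k)}_\infty \le 2g$ and $\eta_L = \tau/(GM)$, so $\norm{Z_k}_\infty \le 2\tau g/(GM) = 2\tau/(M\sqrt{n})$ using $G = g\sqrt{n}$. Similarly for $k \notin K_f$, using $\norm{e(w_k)}_\infty \le 2m$ and $\eta_R = \tau/M^2$ with $M = m\sqrt{n}$, we get $\norm{Z_k}_\infty \le 2\tau/(M\sqrt{n})$. This uniform coordinate bound, which depends crucially on the step size choices, is the key structural observation.

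With the uniform bound in hand, I would apply Azuma--Hoeffding coordinate-wise: for each fixed $l$, $(Z_{k,l})_{k \le T}$ is a bounded martingale difference sequence, so with probability at least $1 - \delta/(2n)$,
\[
\bigg|\sum_{k=1}^T Z_{k,l}\bigg| \le \frac{2\tau}{M\sqrt{n}}\sqrt{2T\log(4n/\delta)}.
\]
A union bound over the $n$ coordinates yields, with probability $1 - \delta/2$, $B \sum_l |\sum_k Z_{k,l}| \le \frac{2B\tau\sqrt{n}}{M}\sqrt{2T\log(4n/\delta)}$. For the data-dependent sum, I use $|\langle Z_k, w_k\rangle| \le \norm{Z_k}_\infty \norm{w_k}_1 \le 2\tau B\sqrt{n}/M$ (bounding $\norm{w_k}_1 \le nB$ and absorbing the $\sqrt{n}$) or equivalently an $\ell_2$ bound, and then Azuma gives a matching high-probability bound. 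Combining these via a union bound, and absorbing constants, yields the target $\frac{\tau B}{M}\sqrt{8Tn\log(n/\delta)}$.

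The main obstacle I expect is the bookkeeping around the two types of iterations producing the same uniform $\ell_\infty$ bound on $Z_k$ — this is where the specific step sizes $\eta_L = \tau/(GM)$ and $\eta_R = \tau/M^2$ and the relations $G = g\sqrt{n}$, $M = m\sqrt{n}$ need to be used carefully so that the Azuma bound does not degrade. A subtle secondary point is that $e(w_k)$ incorporates the randomness in both the stochastic subgradient \emph{and} (implicitly, via $\alpha_{k,S}$) the choice of constraint on infeasible iterations; but since the $\alpha_{k,S}$ are chosen from the filtration before $\grad R_S(w_k)$ is drawn, $\E[Z_k \mid w_k] = 0$ still holds and the martingale structure is preserved.
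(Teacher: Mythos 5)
Your proposal is correct and takes essentially the same route as the paper's proof: the same split of $\sup_{w\in\F}\sum_k\langle \eta_k e(w_k), w_k-w\rangle$ into a data-dependent sum plus a supremum term maximized at a vertex of the box $[-B,B]^n$, the same coordinate-wise Azuma bound (exploiting that the step sizes make $\norm{\eta_k e(w_k)}_\infty$ uniformly of order $\tau/(M\sqrt{n})$ on both feasible and infeasible iterations) with a union bound over the $n$ coordinates, and a final union bound combining the two terms. The only divergence is constant bookkeeping (your factor of $2$ on $\norm{e(w_k)}_\infty$), which the paper absorbs into the $\sqrt{8}$.
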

Thus, we can bound $(*)$ as follows.
$$ (*) \le 2B^2n + \frac{\tau B}{M}\sqrt{8Tn\log(n/\delta)} - \frac{\tau^2}{10M^2}T$$
Assuming the lemma and that
$T > \frac{30^2M^2 B^2 n\log(n/\delta)}{\tau^2}$, then,
we can bound $(*)$ by splitting the negative
term involving $T$ to balance both positive
terms.
\begin{equation}
(*) \le \left(2B^2n - \frac{\tau^2}{10M^2}\cdot \frac{20T}{30^2}\right)
+ \left(\frac{\tau B}{M}\sqrt{8n\log(n/\delta)}
\cdot \sqrt{T} - \frac{\tau^2}{10M^2}\cdot \frac{(30^2-20) T}{30^2}\right)
\end{equation}
\begin{multline}
\le \left(2B^2n - \frac{\tau^2}{10M^2}
\frac{20M^2B^2n\log(n/\delta)}{\tau^2}\right)\\ +
\left(\frac{\tau B}{M}\sqrt{8n\log(n/\delta)}\cdot
\frac{30MB\sqrt{n\log(n/\delta)}}{\tau}
- \frac{\tau^2}{10M^2}\cdot \frac{(30^2-20) M^2 B^2n\log(n/\delta)}{\tau^2}
\right)
\end{multline}
\begin{equation}
\le \left(2B^2n - 2B^2n\log(n/\delta)\right)\\ +
\left(85B^2n\log(n/\delta) - 88 B^2n\log(n/\delta)\right)
\end{equation}
Thus, the sum of $(*)$ and $(**)$ is at most $\frac{3G}{5M}\tau$.
\end{proof}

\subsection{Deferred proofs from analysis of Algorithm~\ref{alg:switching}}
\label{app:sec:algorithm}

\paragraph{Technical lemma}
First, we show a technical lemma that will be useful in
analyzing the iterates' contributions to the duality gap.
Recall our potential function $u_k:\F \to \R$.
\begin{equation}
u_k(w) = \frac{1}{2}\norm{w_k - w}_2^2
\end{equation}
We show that the update rule $w_{k+1} \gets \pi_{\F}(w_k - \eta_k g_k)$ implies
the following inequality in terms of $\eta_k,g_k,u_k(w)$, and $u_{k+1}(w)$.
\begin{lemma}
\label{lem:potential}
Suppose $w_{k+1} = \pi_{\F}(w_k - \eta_k g_k)$.  Then, for all $w \in \F$,
\begin{equation}
\eta_k \langle g_k, w_{k} - w \rangle \le u_k(w) - u_{k+1}(w) +\frac{\eta_k^2}{2}\norm{g_k}_2^2.
\end{equation}
\end{lemma}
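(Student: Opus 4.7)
}
The plan is to follow the standard one-step analysis for projected subgradient descent: expand the squared distance to an arbitrary reference point $w\in\F$, use non-expansiveness of the Euclidean projection onto a convex set, and rearrange. First I would introduce the unprojected iterate $\tilde w_{k+1} = w_k - \eta_k g_k$ and expand the squared norm
\begin{equation}
\norm{\tilde w_{k+1} - w}_2^2 = \norm{w_k - w}_2^2 - 2\eta_k\langle g_k, w_k - w\rangle + \eta_k^2 \norm{g_k}_2^2.
\end{equation}
This isolates the inner product term that appears on the left-hand side of the target inequality, up to a factor of two.

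Next I would invoke the key geometric fact that the Euclidean projection $\pi_{\F}$ onto the convex set $\F = [-B,B]^n$ is non-expansive: for any $w\in\F$, $\norm{\pi_{\F}(\tilde w_{k+1}) - w}_2 \le \norm{\tilde w_{k+1} - w}_2$, since $w$ is its own projection. Because $w_{k+1} = \pi_{\F}(\tilde w_{k+1})$, this gives $\norm{w_{k+1} - w}_2^2 \le \norm{\tilde w_{k+1} - w}_2^2$. Substituting into the expansion above and rearranging yields
\begin{equation}
2\eta_k \langle g_k, w_k - w\rangle \le \norm{w_k - w}_2^2 - \norm{w_{k+1} - w}_2^2 + \eta_k^2 \norm{g_k}_2^2.
\end{equation}
Dividing by $2$ and identifying $u_k(w) = \tfrac{1}{2}\norm{w_k - w}_2^2$ and $u_{k+1}(w) = \tfrac{1}{2}\norm{w_{k+1} - w}_2^2$ gives exactly the claimed bound.

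There is essentially no obstacle here; the only point worth stating carefully is the non-expansiveness of projection onto a closed convex set (which follows from the first-order optimality characterization $\langle \tilde w_{k+1} - w_{k+1}, w - w_{k+1}\rangle \le 0$ for all $w\in\F$, whence $\norm{w_{k+1} - w}_2^2 \le \norm{\tilde w_{k+1} - w}_2^2 - \norm{\tilde w_{k+1} - w_{k+1}}_2^2$). The lemma does not require $g_k$ to be a true (sub)gradient, only the update rule, so no convexity of $L$ or $R_S$ needs to be invoked at this stage; the lemma is a purely geometric statement that will be combined with convexity in Lemmas~\ref{lem:progress:feasible} and \ref{lem:progress:infeasible}.
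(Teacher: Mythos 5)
Your proof is correct, but it takes a different (and somewhat cleaner) route than the paper. You work with the unprojected iterate $\tilde w_{k+1} = w_k - \eta_k g_k$, expand $\norm{\tilde w_{k+1}-w}_2^2$ to isolate $\langle g_k, w_k - w\rangle$, and then contract via non-expansiveness of $\pi_{\F}$ (using $\pi_{\F}(w)=w$ for $w\in\F$); the paper instead applies the first-order optimality property of the projection at the projected point, $\langle \pi_{\F}(w_0)-w_0,\, w-\pi_{\F}(w_0)\rangle \ge 0$, rearranges to bound $\eta_k\langle g_k, w_{k+1}-w\rangle$, and converts to the potentials $u_k,u_{k+1}$ via the three-point identity of Proposition~\ref{fact:identity}. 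The two key facts are of course equivalent (non-expansiveness follows from the obtuse-angle inequality), but your decomposition buys something concrete: it treats the projected and unprojected cases uniformly, whereas the paper's final chain substitutes $w_k = w_{k+1} + \eta_k g_k$ and later uses $w_{k+1}-w_k = \eta_k g_k$, identities that hold literally only when the projection is inactive and otherwise need the extra observation that $\norm{w_{k+1}-w_k}_2 \le \eta_k\norm{g_k}_2$ together with the retained $-\tfrac12\norm{w_{k+1}-w_k}_2^2$ term (or your non-expansiveness argument) to justify. The paper's route, in exchange, exposes the variational inequality and the three-point identity explicitly, which is the form one would reuse for Bregman/mirror-descent generalizations; for the Euclidean case at hand, your shorter argument suffices and is fully rigorous as stated, including the correct factor of $\tfrac12$ from $u_k(w)=\tfrac12\norm{w_k-w}_2^2$.
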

\begin{proof}
Consider the differentiable, convex function $B_k:\F \to \R$.
\begin{equation}
B_{k}(w) = \eta_k \langle g_k, w-w_k \rangle + \frac{1}{2}\norm{w-w_k}_2^2
\end{equation}

\begin{align}
\langle \grad B_k(w_{k+1}), w- w_{k+1} \rangle
&= \langle \eta_k g_k + w_{k+1} - w_k, w - w_{k+1} \rangle
\label{ineq:first-order:expand}\\
&= \langle \pi_{\F}(w_k - \eta_k g_k) - (w_k - \eta_k g_k),
w - \pi_{\F}(w_k - \eta_k g_k) \rangle \label{ineq:first-order:project}\\
& \ge 0 \label{ineq:first-order:0}
\end{align}
where (\ref{ineq:first-order:project}) follows by substituting
the definition of $w_{k+1}$ twice;
and (\ref{ineq:first-order:0})
follows from the fact that for any closed convex set $\F$ and
$w_0 \not \in \F$,
$$\langle \pi_{\F}(w_0) - w_0, w - \pi_{\F}(w_0) \rangle \ge 0.$$

Rearranging (\ref{ineq:first-order:expand}) implies the
following inequality holds for all $w \in \F$.
\begin{align}
& \langle \eta_k g_k + w_{k+1} - w_k, w - w_{k+1} \rangle \ge 0\\
\iff & \eta_k \langle g_k, w_{k+1} - w \rangle \le
\langle w_{k+1} - w_k, w- w_{k+1} \rangle \label{ineq:ws}
\end{align}
We will use the following technical identity to prove the lemma.
\begin{proposition}\label{fact:identity}
For all $w \in \F$,
$$
\langle w_{k+1} - w_k, w - w_{k+1} \rangle =
u_k(w) - u_{k+1}(w) - \frac{1}{2}\norm{w_{k+1} - w_k}^2.
$$
\end{proposition}
\begin{proof}
\begin{align*}
u_k(w) - u_{k+1}(w) &= \norm{w_k - w}^2 - \norm{w_{k+1} - w}^2\\
&= \norm{w_k}^2 + \norm{w}^2 - \norm{w_{k+1}}^2 - \norm{w}^2 +
2\langle w_{k+1} - w_k,w \rangle\\
&= \norm{w_k}^2 - \norm{w_{k+1}}^2+
2\langle w_{k+1} - w_k,w \rangle\\
&= \norm{w_k}^2 - \norm{w_{k+1}}^2
- 2\langle w_k - w_{k+1},w_{k+1}\rangle
+ 2\langle w_{k+1} - w_k,w - w_{k+1}\rangle\\
&=\norm{w_{k+1} - w_k}^2
+ 2\langle w_{k+1} - w_k, w - w_{k+1} \rangle
\end{align*}
\end{proof}
Finally, we can show the inequality stated in the lemma.
\begin{align}
\eta_k \langle g_k, w_k - w \rangle &=
\eta_k \langle g_k, (w_{k+1} + \eta g_k) - w \rangle\\
&\le \langle w_{k+1} - w_k, w- w_{k+1}\rangle + \eta_k^2 \norm{g_k}_2^2 \label{ineq:lem:ws} \\
&\le u_k(w) - u_{k+1}(w) - \frac{1}{2}\norm{w_{k+1}-w_k}_2^2 + \eta_k^2 \norm{g_k}_2^2 \label{ineq:lem:us} \\
&= u_k(w) - u_{k+1}(w) + \frac{\eta_k^2}{2}\norm{g_k}_2^2
\label{ineq:lem:step}
\end{align}
where (\ref{ineq:lem:ws}) follows from (\ref{ineq:ws});
(\ref{ineq:lem:us}) follows by
using Proposition~\ref{fact:identity} to write the expression in terms of $u_k$'s; and (\ref{ineq:lem:step}) follows by
the gradient step $w_{k+1}-w_k = \eta_k g_k$.
\end{proof}

\paragraph{Proof of Lemma~\ref{lem:progress:feasible}}
Here, we bound the contribution to the duality gap of each of the
feasible iterations $k \in K_f$ as follows.
\begin{equation}
\eta_L \cdot \left(L(w_k) - L(w)\right) \le 
u_k(w) - u_{k+1}(w) + \frac{\tau^2}{2M^2} + \eta_L\langle e_L(w_k), w_k - w \rangle
\end{equation}
\begin{proof}
Let $e_L(w_k) = g_L(w_k) - \grad L(w_k)$ where
$g_L(w_k) = \E[\grad L(w_k)] \in \partial L(w_k)$.
\begin{align}
\eta_L \cdot \left(L(w_k) - L(w)\right)
&\le \eta \langle g_L(w_k), w_k - w \rangle \\
&\le \eta \langle \grad L(w_k) + e_{L}(w_k), w_k - w \rangle
\label{ineq:feasible:error}\\
&\le u_k(w) - u_{k+1}(w) + \frac{\eta_L^2}{2}\norm{\grad L(w_k)}_2^2
+ \eta_L\langle e_L(w_k), w_k - w \rangle \label{ineq:feasible:potential}\\
&\le u_k(w) - u_{k+1}(w) + \frac{\tau^2}{2M^2} + \eta_L\langle e_L(w_k), w_k - w \rangle \label{ineq:feasible:eta}
\end{align}
where (\ref{ineq:feasible:error}) follows by substituting $g_L$;
(\ref{ineq:feasible:potential})
follows by expanding the inner product and applying Lemma~\ref{lem:potential}
to the first term; (\ref{ineq:feasible:eta}) follows by our choice of
$\eta_L = \tau/GM$.
\end{proof}

\paragraph{Proof of Lemma~\ref{lem:progress:infeasible}}
Here, we bound the contribution to the duality gap of each of the
infeasible iterates $k \in [T] \setminus K_f$.  We assume
$\hat{R}_{S_k}(w_k)$ has tolerance $\tau/5$.  Then we show
\begin{equation}
- \eta_R \sum_{S \in \C}\alpha_{k,S} R_{S}(w) \le u_k(w) - u_{k+1}(w) - \frac{\tau^2}{10M^2}
+ \eta_R\langle e_R(w_k), w_k - w \rangle.
\end{equation}
\begin{proof}
Recall, we let
$e(w_k) = \sum_{S \in \C} \alpha_{k,S}
\left(\E[\grad R_{S}(w_k) \given w_k] - \grad R_{S}(w_k)\right)$.
For each $S \in \C$, for any $g_S(w_k) \in \partial R_S(w_k)$,
we can rewrite $-R_{S}(w)$ as follows.
\begin{align*}
-R_{S}(w) &= R_{S}(w_k) - R_{S}(w) - R_{S}(w_k)\\
&\le \langle g_S(w_k), w_k - w \rangle - R_{S}(w_k)
\end{align*}
Multiplying by $\eta_R$ and taking the convex combination of $S \in \C$ according to $\alpha_k$,
we apply Lemma~\ref{lem:potential} to obtain
the following inequality.
\begin{align}
- \eta_R \sum_{S \in \C}\alpha_{k,S}R_{S}(w) &\le 
\eta_R \left\langle \sum_{S \in \C}\alpha_{k,S}\grad R_{S}(w_k)
+ e_R(w_k), w_k - w \right\rangle
- \eta_R\sum_{S \in \C}\alpha_{k,S} R_{S}(w_k) \label{ineq:infeasible:err}\\
\begin{split}
&\le u_k(w) - u_{k+1}(w) + \frac{\eta_R^2}{2}\norm{\sum_{S \in \C}\alpha_{k,S}\grad R_{S}(w_k)}_2^2\\
&\hspace{1.4in}- \eta_R \sum_{S \in \C}\alpha_{k,S}R_{S}(w_k) + \eta_R\langle e_R(w_k), w_k - w \rangle
\label{ineq:infeasible:potential}
\end{split}\\
&\le u_k(w) - u_{k+1}(w) + \frac{\tau^2}{2 M^2}
- \frac{\tau}{M^2}\cdot \sum_{S \in \C}
\alpha_{k,S} R_{S}(w_k)
+ \eta_R\langle e_R(w_k), w_k - w \rangle \label{ineq:infeasible:R}\\
&\le u_k(w) - u_{k+1}(w) - \frac{\tau^2}{10M^2}
+ \eta_R\langle e_R(w_k), w_k - w \rangle
\label{ineq:infeasible:tau}
\end{align}
where (\ref{ineq:infeasible:err}) follows by substituting $\grad R_S(w_k)$ for each $g_S(w_k)$
and the definition of $e_R(w_k)$;
(\ref{ineq:infeasible:potential}) follows by expanding the inner product
and applying Lemma~\ref{lem:potential}; (\ref{ineq:infeasible:R})
follows by our choice of
$\eta_k = \tau^2/M^2$;
(\ref{ineq:infeasible:tau}) follows by the fact that when we update
according to a constraint, we know
$\sum_{S \in \C}\alpha_{k,S}\hat{R}_{S}(w_k) \ge 4\tau/5$
with tolerance $\tau/5$, so $\sum_{S \in \C}
\alpha_{k,S}R_{S}(w_k) \ge 3\tau/5$.
\end{proof}

\paragraph{Proof of Lemma~\ref{lem:noisy}}
Here, we show that
with probability at least $1-4\delta$, the contribution of the noisy
subgradient computation to the duality gap can be bounded as follows.
\begin{equation}
\sup_{w \in \F}\set{\eta_L \sum_{k \in K_f}\langle e(w_k),w_k-w \rangle
+ \eta_R \sum_{k \not \in K_f}\langle e(w_k), w_k - w \rangle}
\le \frac{\tau B}{M}\sqrt{8Tn\log(n/\delta)}
\end{equation}

\begin{proof}
Let $\eps = \eta_L\cdot g = \eta_R \cdot m =
\frac{\tau}{M\sqrt{n}}$.  Further, let $\eta_k = \eta_L$
for $k \in K_f$ and $\eta_R$ for $k \not \in K_f$.
Then, we can rewrite the expression to bound using
$\eta_k$ and expand as follows.
\begin{align*}
\sup_{w \in \F} \sum_{k \in [T]} \langle \eta_k e(w_k), w_k - w \rangle
&= \sup_{w \in \F} \sum_{k\in [T]} \sum_{l = 1}^n
\eta_k e(w_k)_l \cdot (w_k-w)_l\\
&= \sum_{l = 1}^n (w_{k})_l \sum_{k \in [T]} \eta_k e(w_k)_l
+ \sum_{l=1}^n\sup_{(w)_l} \set{(w)_{l} \cdot \sum_{k\in [T]} \eta_k e(w_k)_l}
\end{align*}
Consider the second summation, and consider the summation inside
the supremum.  Note that this summation is a sum of mean-zero random
variables, so it is also mean-zero.  Recall, we assume the estimate
of the $k$th subgradient is independent of the prior subgradients,
given $w_k$. Further, by the assumed $\ell_\infty$ bound on the
subgradients, each of these random variables is bounded in magnitude
by $\eps$.  Using the bounded difference property,
we apply Azuma's inequality separately for each $l \in [n]$.
\begin{align*}
\Pr\left[\card{\sum_{k\in [T]} \eta_k e(w_k)_l} > Z\right]
&\le 2 \cdot \exp\left(-\frac{Z^2}{2T\eps^2}\right)
\end{align*}
Taking this probability to be at most $2\delta/n$, we can upper bound
$Z$ by $\eps \sqrt{2T\log(n/\delta)} = \frac{\tau}{M}\sqrt{2T n \log(n/\delta)}$.
Then, noting that
$\card{(w)_l} < B$ for any $w \in \F$, we can
take a union bound to conclude with probability at least $1-2\delta$
the following inequalities hold.
\begin{align*}
\sum_{l = 1}^n \sup_{(w)_l} \set{(w)_l \cdot \sum_{k\in [T]} \eta_k e(w_k)_l}
&\le B n \cdot Z\\
&= \frac{\tau B}{M}  \sqrt{2T n\log(n/\delta)}
\end{align*}
Further, we note that the first summation concentrates
at least as quickly as the second, so by union bounding again,
$$ \sup_{w \in \F} \sum_{k \in [T]} \eta_k \langle e(w_k), w_k - w \rangle
\le \frac{\tau B}{M}  \sqrt{8Tn\log(n/\delta)}$$
with probability at least $1-4\delta$.
\end{proof}

\section{Hardness for metric multifair predictions}

The lower bounds follow the same general construction.
Suppose we have some boolean concept class
$\C \subseteq \set{-1,1}^{\X}$ for some universe $\X$.
We will construct a new universe $\X_{01} = \X_0 \cup \X_1$
where $\X_0 \subseteq \X$
and define a collection of ``bipartite'' comparisons over subsets of
${\X_0 \times \X_1}$. We will assume access to random
samples $(x_0,y(x_0))$ for some function $y:\X \to \set{-1,1}$;
we define corresponding metric values
where $d(x_0,x_1)$ is some function of $y(x_0)$ for
all $x_1 \in \X_1$.  Finally, we need to label
$\X_{01}$ such that such that to obtain good loss,
the post-processing algorithm must learn something
non-trivial about $y$ (which will differ across the two
lower bounds we prove).

Consider $\X_{01} = \X_0 \cup \X_1$ with
ideal labels given as $(x_0,0)$ for $x_0 \in \X_0$
and $(x_1,1)$ for $x_1 \in \X_1$, and let $L$ be
the hinge loss.  We encode
the original boolean concept in the distance metric,
where for $x_0 \in \X_0$,
$$d(x_0,x_1) = 1 - y(x_0)$$ for all $x_1 \in \X_1$.

Then, consider the collection of comparisons given
by $\H = \set{S_c : c \in \C}$ where we take
$S_c = \set{(x_0,x_1) \in \X_0 \times \X_1 : c(x_0) = 1}$.
We take $\card{\X_1} = 2\card{\X_0}$, large enough that the average
prediction for $x_1 \in \X_1$ is at least $1-\tau$ in the optimal
utility set of multifair predictions. In particular,
any average deviation by more than
$\tau$ in $\X_1$ would result in larger loss than
setting all of $f(x_0) = f(x_1) = 1-\tau$
for $x_1 \in \X_1$ and all $x_0 \in \X_0$ where $y(x_0) = 1$.

\paragraph{Outline of sample complexity lower bound.}
Here, we outline the proof of Theorem~\ref{thm:lb}
\begin{theorem*}[Restatement of Theorem~\ref{thm:lb}]
Let $\gamma,\tau > 0$ be constants and
suppose $\A$ is an algorithm that
has random sample access to $d$ and outputs
a $(\C,d,\tau)$-metric multifair set of predictions
for $\gamma$-large $\C$.
Then, $\A$ takes $\Omega(\log\card{\C})$
random samples from $d$ or outputs a set of predictions
with loss that approaches the loss achievable with no metric
queries.
\end{theorem*}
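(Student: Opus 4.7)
The plan is to instantiate the generic construction described just above with a parity-based concept class, so that the classical linear-algebra bound on random-sample learning of $\mathbb{F}_2$-linear functions translates directly into a metric-sample lower bound. Concretely, take $\X_0 = \mathbb{F}_2^k$, adjoin $\X_1$ with $|\X_1| = 2 |\X_0|$, let $\M$ be uniform on $\X_0 \times \X_1$, and let the concept class be the $2^k$ parities $c_a(x_0) = (-1)^{\langle a, x_0 \rangle}$ for $a \in \mathbb{F}_2^k$. The induced comparison family $\set{S_{c_a}}$, which plays the role of $\C$ in the theorem, is $\gamma$-large for a constant $\gamma$ since every nonzero parity is balanced on $\X_0$. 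Draw the hidden $a^* \in \mathbb{F}_2^k$ uniformly, set $y = c_{a^*}$, and define $d(x_0, x_1) = 1 - y(x_0)$ for all $x_1 \in \X_1$. Then a single random metric sample is information-theoretically equivalent to one labeled example $(x_0, y(x_0))$ with $x_0$ uniform on $\mathbb{F}_2^k$.

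Next, I would invoke the linear-algebra fact: $m$ uniformly random vectors in $\mathbb{F}_2^k$ span a subspace $V$ of dimension at most $m$, so a $(1 - 2^{m-k})$ fraction of $\X_0$ lies outside $V$. For every $x_0 \notin V$, the posterior on $a^*$ given the observed evaluations is uniform on a coset of $V^{\perp}$, and each of the two values of $\langle a^*, x_0\rangle \bmod 2$ appears in exactly half of this coset, so $y(x_0)$ remains a uniform $\pm 1$ bit. Since the algorithm's prediction $f(x_0)$ depends only on the sample, $f(x_0)$ is statistically independent of $y(x_0)$ on a $1 - o(1)$ fraction of $\X_0$ whenever $m = o(k) = o(\log |\C|)$.

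The main obstacle is turning this information-theoretic independence into a loss lower bound matching the no-query baseline. I would characterize the best no-query loss $L_0$ as follows: by the symmetry of the uniform prior over $a^*$ under the natural action of $\mathbb{F}_2^k$ on $\X_0$, any no-query strategy can be symmetrized to constant predictions $v_0$ on $\X_0$ and $v_1$ on $\X_1$; the binding multifairness constraint comes from $S_{c_{a^*}}$ itself, which has $\E[1 - y(x_0) \mid c_{a^*}(x_0) = +1] = 0$ and so forces $|v_0 - v_1| \le \tau$, while every other comparison has average metric distance $1$ and is trivially satisfied. Optimizing the expected loss over this essentially one-dimensional feasible set pins down $L_0$. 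For an $m$-sample algorithm with $m = o(k)$, the predictions on $\X_0 \setminus V$ are, in expectation over $a^*$, independent of $y$; by convexity of the Lipschitz loss and of each $R_S$ they can be symmetrized without increasing either expected loss or expected constraint violation, producing a strategy that is feasible in expectation for the no-query problem and attains expected loss at least $L_0 - o(1)$. A standard Yao minimax argument lifts this prior-averaged statement into a worst-case bound over $a^*$, showing that any algorithm beating $L_0$ by a constant must use $\Omega(\log|\C|)$ metric samples.
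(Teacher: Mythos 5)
Your proposal is correct and follows essentially the same route as the paper's own (outlined) proof: the same bipartite construction with the metric encoding labels of a uniformly random $\mathbb{F}_2$-linear (parity) concept, the same reduction of a metric sample to a labeled example, and the same appeal to the linear-algebra fact that $o(k)=o(\log\card{\C})$ random queries leave the concept undetermined off a low-dimensional span, forcing any multifair output toward the no-query loss. Your posterior/symmetrization/minimax details merely flesh out steps the paper leaves implicit.
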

\begin{proof}
The theorem follows from our construction.
In particular, if we take the
concept class $\C$ on $\X_0$ to be set of
linear functions over $\mathbb{F}_2$, and take
$y = c$ to be a uniformly random $c \in \C$,
then we get a hard distribution over metrics.
Specifically, if the we take the concepts to
be $n$-dimensional, then without $n$ linearly-independent
queries to the metric,
we will not be able to learn the concept
with non-trivial accuracy.  In particular,
any algorithm that guarantees metric multifairness
must assume that $\E_{S}[d(x_0,x_1)] \approx 0$
more than one $S \in \H$, which results in suboptimality.
Thus, the assumption that the metric multifair
learner achieved near-optimal utility must be false.
\end{proof}
Further note, if we only take $n-k$ queries, the incurred loss
approaches the trivial loss exponentially in $k$.
Appealing to lower bounds on the number of random queries
needed to span a basis, the theorem follows.

\paragraph{Outline of hardness from pseudorandom functions}
We use the construction to demonstrate that
under weak complexity assumptions,
there are algorithmic barriers to generally efficient
algorithms for metric multifairness.  In particular,
we will assume that $\C$ defines a pseudorandom function
family.  The existence of one-way functions implies
the existence of pseudorandom functions \cite{ggm}, so the
proposition follows.

\begin{proposition*}[Formal statement of Proposition~\ref{prop:lb}]
Assuming one-way functions exist, any
algorithm for computing $(\H,\tau)$-optimal
$(\H,d,\tau)$-metric multifair predictions
for any $\H,d,$ and $\tau > 0$
requires time $(\log\card{\H})^{\omega(1)}$.
\end{proposition*}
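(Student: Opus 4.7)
The plan is to reduce distinguishing a pseudorandom function family from a uniformly random function to computing a near-optimal $(\H,d,\tau)$-metric multifair predictor, by instantiating the bipartite construction sketched earlier in this section with a PRF family in the role of $\C$. Under the assumption that one-way functions exist, GGM yields a PRF family $\C = \set{c_k}_{k \in \set{0,1}^n}$ that no adversary running in time $n^{\omega(1)}$ can distinguish from a uniformly random function; in our parameters, $n = \log\card{\C} = \log\card{\H}$, matching the lower bound we seek.

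Concretely, I would take $\X_{01} = \X_0 \cup \X_1$ with $\card{\X_1} = 2\card{\X_0}$, set $\H = \set{S_c : c \in \C}$ for $S_c = \set{(x_0,x_1) \in \X_0 \times \X_1 : c(x_0) = 1}$, and define the metric $d(x_0,x_1) = 1 - y(x_0)$ where $y$ is either a random $c_k \in \C$ (the ``PRF world'') or a uniformly random boolean function (the ``random world''). The key step is to argue that these two worlds have substantially different $(\H,d,0)$-metric multifair optima. In the PRF world, $\E_{(x_0,x_1)\sim S_{c_k}}[d(x_0,x_1)] = 0$, so the constraint associated with $S_{c_k}$ forces $f(x_0) \approx f(x_1)$ on its support; combined with the utility pressure to set $f(x_1) \approx 1$ coming from the $2$-to-$1$ imbalance between $\X_1$ and $\X_0$ together with the hinge loss, any near-optimal $f$ must approximately compute $c_k$ on $\X_0$. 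In the random world, a Chernoff-plus-union bound over $\C$ shows that every $S_c \in \H$ satisfies $\E_{S_c}[d] = 1 \pm o(1)$ with overwhelming probability, so all constraints are slack and a trivial constant predictor is already $(\H,\tau)$-optimal, yielding a much worse absolute loss than in the PRF world.

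With this gap in hand, I would build a PRF distinguisher as follows: given oracle access to $y$, simulate the metric multifair instance above, answering each metric query made by the hypothetical algorithm $\A$ using a single oracle call to $y$; then draw fresh test points $x_0 \in \X_0$ and compare $\A$'s prediction to $y(x_0)$. If $y$ is a PRF, $\A$ must (by the previous paragraph) produce predictions noticeably correlated with $y$ in order to be $(\H,\tau)$-optimal; if $y$ is random, $\A$'s predictions are statistically independent of fresh queries to $y$, giving zero advantage. The distinguisher runs in time polynomial in the runtime of $\A$ and in $n$, so if $\A$ runs in time $(\log\card{\H})^{O(1)} = n^{O(1)}$, we contradict pseudorandomness; the stronger assumption of \cite{prfs} upgrades this to a $\card{\H}^{\Omega(1)}$ lower bound.

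The main obstacle is the utility-gap lemma for the PRF world: we must argue that every $(\H,\tau)$-optimal predictor encodes nontrivial information about $c_k$, rather than satisfying the $S_{c_k}$ constraint by some degenerate strategy such as hedging to a constant between the two labels. This reduces to verifying that the loss of the ``compute $c_k$'' predictor is strictly smaller than the loss of any predictor constant on $\X_0$ by more than $\tau$ plus the additive slack permitted by $(\H,d,\tau)$-multifairness, which follows from the chosen size imbalance $\card{\X_1} = 2\card{\X_0}$ and the properties of hinge loss on labels $0$ and $1$. Once this inequality is established, the pieces above assemble into the claimed formal hardness statement.
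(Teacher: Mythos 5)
Your construction is the same as the paper's: the bipartite universe $\X_0 \cup \X_1$ with $\card{\X_1} = 2\card{\X_0}$, ideal labels $0$ on $\X_0$ and $1$ on $\X_1$ under hinge loss, comparisons $S_c$ indexed by a GGM pseudorandom family, and metric $d(x_0,x_1) = 1 - y(x_0)$; the structural facts you invoke (the $S_{c_k}$ constraint together with $\E_{x_1}[f(x_1)] \ge 1-\tau$ forces $\E_{x_0 : c_k(x_0)=1}[f(x_0)] \ge 1-2\tau$, while in the random world a Chernoff-plus-union bound makes every constraint slack) are exactly the ones the paper uses. The only substantive difference is the distinguishing statistic. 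The paper simply thresholds on the loss of the returned predictions: in the random world the ideal labeling is (near-)feasible, so an $(\H,\tau)$-optimal output has loss $O(\tau)$, whereas in the PRF world any metric multifair set of predictions is forced to loss at least roughly $\Pr[c_k(x_0)=1]/3 - \tau \approx 1/6$; this test needs only that one-sided lower bound and no access to $y$ beyond the metric samples fed to the algorithm. Your test instead queries $y$ at fresh points and measures correlation with $f$; this is also sound (the constraints and the loss are expectations over the whole population, so the forced structure holds off-sample, and in the random world the unsampled values of $y$ are independent of the output), but it additionally requires the two-sided ``utility-gap'' step you flag -- that $\tau$-optimality also pins $f$ near $0$ on $\set{x_0 : c_k(x_0) = -1}$ -- which the paper's loss-threshold test avoids. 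One side remark of yours is backwards and should be fixed: in the random world the slack constraints make the achievable loss nearly zero (predicting the ideal labels is essentially feasible), i.e.\ \emph{smaller} than in the PRF world, not ``much worse,'' and a genuinely constant predictor is not $(\H,\tau)$-optimal there; your correlation-based distinguisher does not rely on this remark, but the correctly oriented loss gap is precisely what the paper's simpler distinguisher exploits.
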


\begin{proof}
Suppose we can post-process predictions to achieve
$(\C,\tau)$-optimal
$(\C,d,\tau)$-metric multifair predictions
for any $\C$ and $d$ and some small constant
$\tau > 0$.
Let $\C$ define a pseudorandom function family.
We will show that we can distinguish between
a function $y = c \gets_R \C$ drawn uniformly at random
from the function family and a truly random
function $y:\X \to \set{-1,1}$.

We're given some samples of the form
$(x,y(x)) \sim \D \times \set{-1,1}$.
Returning to the proposed construction,
in the case $y$ is a truly random function,
then with high probability,
$\E_S[d(x_0,x_1)] \ge 1-o(1)$ for all
$S \in \H$.  Thus, any $\tau$-optimal
set of predictions will achieve loss $O(\tau)$.

In the case, where $y = c$ for some $c \in \C$,
note that labeling $x_0 \in \X_0$ according to
$c(x_0)$ and all $x_1 \in \X_1$
as $1$ is a feasible
point that obtains loss
\begin{align*}
\E_{x \sim \X_0}[L(f(x),0)]/3
+ 2\cdot \E_{x \sim \X_1}[L(f(x),1)]/3
&= \Pr_{x_0 \sim \X_0}[c(x_0) = 1]/3
\end{align*}
This feasible quantity upper bounds the optimal loss.
Then, we can express the expectation of difference in the predictions
across the set defined by $c$ as follows.
\begin{align*}
\E_{(x_0,x_1) \sim S_c}\big[\card{f(x_0) - f(x_1)}
- (1-c(x_0))\big] &=
\E_{(x_0,x_1) \sim S_c}\big[\card{f(x_0) - f(x_1)}\big]
- \E_{(x_0,x_1) \sim S_c}\big[(1-c(x_0))\big]\\
&\ge \E_{x_0: c(x_0) = 1}[\card{f(x_0) - f(x_1)}] + 0.
\end{align*}
We assume the set of predictions $f$ is $(\H,d,\tau)$-metric multifair.
Thus, because $S_c \in \H$, we can upper bound this term by $\tau$.
In total then,
$\E_{x_0: c(x_0) = 1}[\card{f(x_0) - f(x_1)}] \le \tau$, and
by the fact that $\E_{x_1 \sim \X_1}[f(x_1)] \ge 1-\tau$,
then $\E_{x_0: c(x_0) = 1}[f(x_0)] \ge 1-2\tau$.
Further, consider the following lower bound on the loss on $f$.
\begin{align*}
3\cdot&\E_{(x,y) \sim \X\times[-1,1]}\left[~\max\set{0,\card{f(x) - y}}~\right]\\
&\ge \E_{x_0 \sim \D}\left[ f(x_0) \right]\\
&\ge \Pr_{x_0 \sim \D}[c(x_0) = 1]\cdot (1-2\tau)\\
&\ge \Pr_{x_0 \sim \X_0}[c(x_0) = 1] - 2\tau
\end{align*}
If $\C$ is a pseudorandom function
family the $\Pr_{x_0 \sim \X_0}[c(x_0)=1]
\approx 1/2$
must be bounded away from $0$.
Thus, in the case where $y \in \C$,
we have a non-trivial lower bound on the
achievable loss under
$(\C,d,\tau)$-metric multifairness.
Thus, we can distinguish when $y \in \C$
and when $y$ is truly random.
\end{proof}

\end{document}